%% file: iclr2027_conference.tex
\documentclass{article} 
\usepackage{iclr2027_conference,times}

\input{math_commands.tex}

\usepackage[utf8]{inputenc} 
\usepackage[T1]{fontenc}    
\usepackage{hyperref}       
\usepackage{url}            
\usepackage{booktabs}       
\usepackage{amsmath}
\usepackage{amssymb}
\usepackage{amsfonts}       
\usepackage{amsthm}
\usepackage{graphicx}
\usepackage{wrapfig}
\usepackage{caption}
\usepackage{nicefrac}       
\usepackage{microtype}      
\usepackage[table]{xcolor}  
\usepackage{pifont}
\usepackage{multirow}

\colorlet{bestred}{red!6}
\colorlet{secondorange}{orange!8}
\colorlet{thirdblue}{cyan!6}
\definecolor{crossrefred}{HTML}{CF553D}

\hypersetup{
    colorlinks=true,
    linkcolor=crossrefred,
    filecolor=magenta,
    urlcolor=cyan,
    citecolor=cyan,
}

\newcommand{\bestcell}[1]{\cellcolor{bestred}{#1}}
\newcommand{\secondcell}[1]{\cellcolor{secondorange}#1}
\newcommand{\thirdcell}[1]{\cellcolor{thirdblue}#1}

\newtheorem{lemma}{Lemma}

\title{OceanFlow: Coherent Flow Matching for\\Probabilistic Ocean Modeling}

\author{Yuan Gao$^{1}$\thanks{Equal contribution. $^{ \dagger}$Corresponding author.}   , Fan Xu$^{2 *}$, Hao Wu$^{1 \dagger}$, Xiaomeng Huang$^{1 \dagger}$ \\
$^1$Tsinghua University, $^2$University of Science and Technology of China
}

\iclrfinalcopy 
\begin{document}

\newcommand{\model}{{\fontfamily{lmtt}\selectfont \textbf{OceanFlow}}}
\newcommand{\method}{\model}

\maketitle

\input{sec/0_abstract}    
\input{sec/1_intro}
\input{sec/3_Method}
\input{sec/4_Experiments}
\input{sec/5_conclusions}

\clearpage

\subsection*{AI use statement}

We use AI to check for grammatical errors in the text; the AI is not involved in ideas, experimental design, or the actual writing of the paper.

\subsection*{Ethics statement}

This work complies with the ICLR Code of Ethics. All datasets and pretrained models are publicly available and used under their respective licenses. We have carefully designed our experiments to minimize risks of bias or discrimination. We uphold the principles of transparency and integrity throughout the work.

\subsection*{Reproducibility statement}

To support reproducibility, we document data sources, preprocessing, and splits in Appendix~\ref{appendix:data}, and provide architectural details, training objectives, and inference procedures in Appendix~\ref{app:detailed_method}. Evaluation metrics and the theoretical proof are presented in Appendices~\ref{app:metrics} and~\ref{app:coherence_proof}, respectively. Training settings and hardware are reported in the main text.

\subsubsection*{Acknowledgments}
This work was supported by the National Natural Science Foundation of China (42125503, 42430602).

\bibliography{iclr2027_conference}
\bibliographystyle{iclr2027_conference}

\appendix
\clearpage
\input{sec/X_suppl}

\end{document}

%% file: math_commands.tex
\usepackage{amsmath,amsfonts,bm}

\def\eqref#1{equation~\ref{#1}}

\def\1{\bm{1}}

\DeclareMathAlphabet{\mathsfit}{\encodingdefault}{\sfdefault}{m}{sl}
\SetMathAlphabet{\mathsfit}{bold}{\encodingdefault}{\sfdefault}{bx}{n}



%% file: sec/0_abstract.tex
\begin{abstract}
Probabilistic ocean modeling is essential for risk-aware marine operations, climate analysis, and scientific decision-making, where downstream decisions depend on the conditional distribution of future ocean states rather than a single deterministic trajectory. Recent data-driven ocean models have made deterministic global ocean modeling increasingly accurate and efficient, but probabilistic ocean modeling remains underdeveloped. Specifically, perturbation-driven ensembles can create spread without explicitly optimizing the joint distribution of their members, often yielding poor calibration and physical inconsistency. To address these issues, we propose \model{}, a coherent flow matching framework for probabilistic global ocean modeling. \model{} first establishes a sharp, stable mean anchor for the conditional ensemble. Flow matching then learns a coherent state-conditioned pathwise law that jointly reshapes the residual distribution around this anchor at each transition, preserving the predictive center while refining state-dependent uncertainty. We further introduce a Factorized Dynamics Operator (FDO), which combines factorized transitions with a learned state-conditioned response to parameterize the mean anchor and residual-flow velocity. Extensive experiments show that \model{} improves ensemble-mean accuracy, probabilistic reliability, and physical-consistency diagnostics over competitive baselines, supporting coherent flow matching as an effective route toward data-driven probabilistic ocean modeling. Code is available at \url{https://github.com/YuanGao-YG/OceanFlow}.

\end{abstract}

%% file: sec/1_intro.tex
\section{Introduction}

Ocean modeling is essential for risk-aware marine operations, climate analysis, and scientific decision-making~\citep{kalnay2003atmospheric,lellouche2018recent,tan2025leveraging}. Incomplete observations leave uncertainty in the initial ocean state, while nonlinear dynamics can amplify small state errors into substantially different future evolutions~\citep{lorenz2017deterministic,leith1974theoretical}. Even an accurate deterministic estimate cannot characterize the range of plausible outcomes or their relative likelihoods. Probabilistic modeling can address this limitation by representing the conditional distribution of future ocean states, providing uncertainty information beyond a single trajectory. Recent data-driven models, including AI-GOMS~\citep{xiong2023ai}, XiHe~\citep{wang2024xihe}, LangYa~\citep{yang2024langya}, and WenHai~\citep{cui2025forecasting}, have substantially advanced deterministic global ocean modeling, combining competitive predictive accuracy with considerably faster inference than traditional numerical models. These gains make ocean modeling more affordable and provide a strong foundation for efficient ensemble generation. However, despite this progress in deterministic modeling, data-driven probabilistic ocean modeling remains comparatively underexplored.

Similar to initial-condition perturbation in numerical ensemble forecasting, data-driven ocean models can be extended to probabilistic modeling by perturbing their initial states and evolving the resulting members with a pretrained deterministic model~\citep{hao2025deep}. Prescribed perturbations can generate ensemble spread and improve ensemble-mean accuracy through averaging. However, members evolve independently without joint distributional optimization, potentially yielding distributions that deviate from physically plausible ocean variability and fail to faithfully represent uncertainty in the underlying dynamics. FuXi-ONS~\citep{huang2026data} advances beyond prescribed noise by learning state-dependent, spatially structured perturbations for autoregressive ocean prediction. This provides a more adaptive representation of ensemble variability. Nevertheless, its memberwise prediction objective leaves ensemble calibration implicit, while member updates do not explicitly account for the evolving ensemble context. These advances motivate a central question: \textbf{\textit{how can the ensemble be jointly refined to more faithfully characterize uncertainty in the underlying ocean dynamics while maintaining a controlled predictive center?}} 

\begingroup
\setlength{\intextsep}{5pt}
\setlength{\columnsep}{10pt}
\begin{wrapfigure}{r}{0.55\textwidth}
    \vspace{-5pt}
    \centering
    \includegraphics[width=\linewidth]{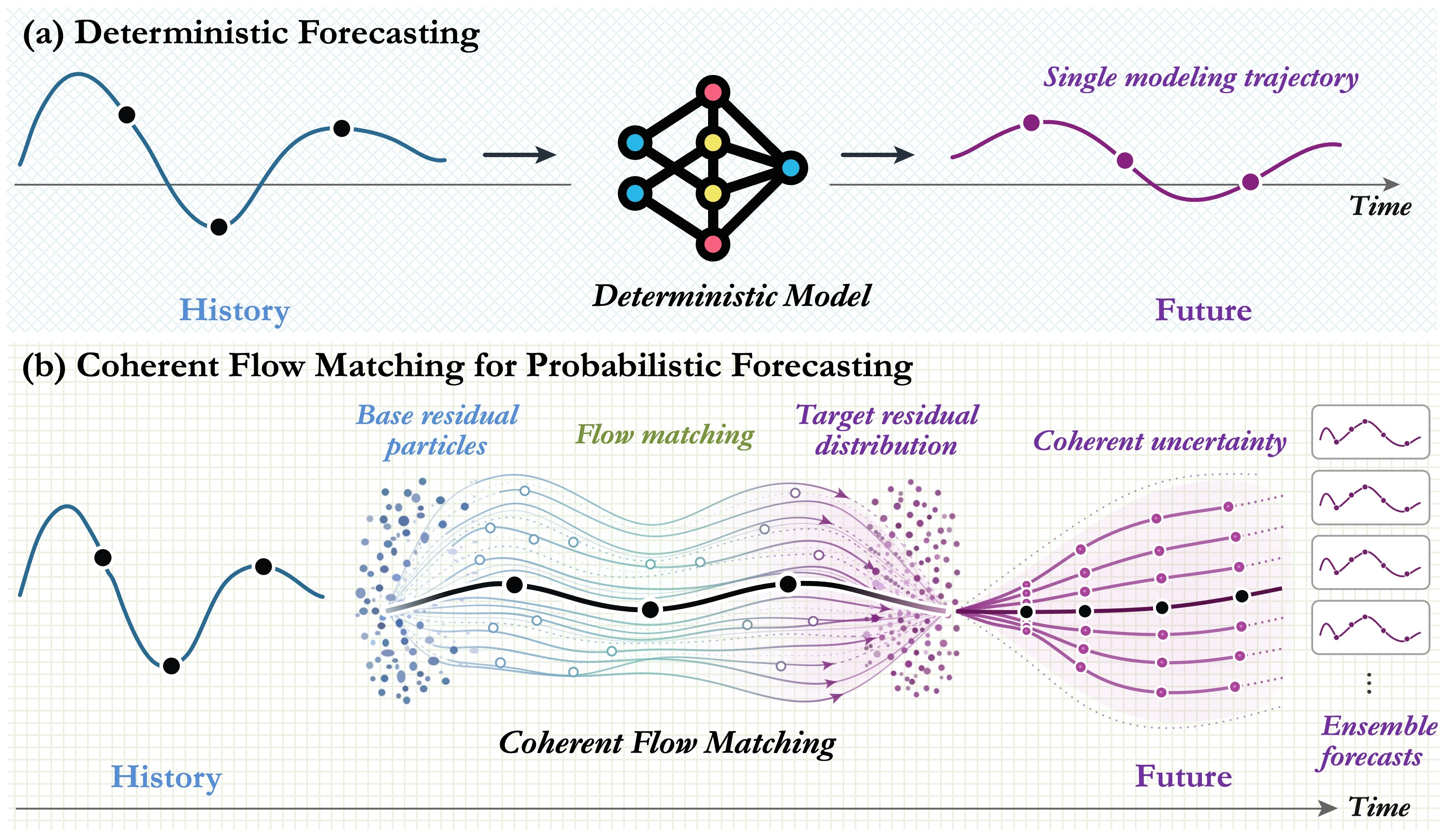}
    \setlength{\abovecaptionskip}{4pt}
    \setlength{\belowcaptionskip}{0pt}
    \vspace{-15pt}
    \caption{Learning coherent ensemble laws for ocean modeling. (a) Deterministic modeling produces a single trajectory. (b) \model{} represents ocean uncertainty through a mean-anchored ensemble, jointly refining member residuals under a coherent state-conditioned pathwise law at each transition.}
    \vspace{-5pt}
    \label{fig:intro}
\end{wrapfigure}

To answer this question, we propose \model{}, a coherent flow matching framework for probabilistic ocean modeling. As illustrated in Figure~\ref{fig:intro}, \model{} learns a coherent state-conditioned pathwise law for the residual ensemble around a controlled mean anchor at each transition. This law couples member-specific transport paths through a shared velocity field conditioned on the ocean state and evolving ensemble context. Ensemble diversity is thus jointly refined as a conditional residual distribution, with final recentering preserving the predictive center. We further introduce a Factorized Dynamics Operator (FDO) that parameterizes both the mean anchor and residual-flow velocity through a unified composition of factorized dynamics and learned state-conditioned response. Our contributions can be summarized as:
\par
\endgroup

\begin{itemize}
    \item We propose \model{}, a coherent flow matching framework for probabilistic ocean modeling that learns a coherent state-conditioned pathwise law around the mean anchor center. By coupling member residuals through the evolving ensemble context, it jointly refines the conditional distribution while preserving a controlled mean anchor.
    \item We introduce a Factorized Dynamics Operator (FDO), which unifies factorized dynamics and a learned state-conditioned response within a single conditional operator formulation. FDO supports both an accurate deterministic reference for the mean anchor and a structured velocity field for residual transport.
    \item Extensive experiments on global ocean simulation and forecasting tasks demonstrate that the proposed framework achieves improved ensemble-mean accuracy, probabilistic reliability, and physical consistency.
\end{itemize}

%% file: sec/3_Method.tex
\section{Method}

\subsection{Problem Formulation}
Let $\mathbf{O}_t\in\mathcal{X}$ denote the ocean state and $\mathbf{F}_t,\mathbf{F}_{t+1}$ the external forcing. Simulation uses observed future forcing, whereas prediction uses forcing generated by a separately trained OneForecast atmospheric model~\citep{gao2025oneforecast}. Both settings follow the same conditional ocean transition. Given $c_t=(\mathbf{O}_t,\mathbf{F}_t,\mathbf{F}_{t+1})$, probabilistic modeling seeks an $M$-member conditional ensemble,
\begin{equation}
\widehat{\mathbf{O}}_{t+1}^{1:M}
\sim
Q_\theta^M(\cdot\mid c_t),
\qquad
Q_\theta^M(\cdot\mid c_t)\in\mathcal{P}(\mathcal{X}^{M}).
\label{eq:ensemble_distribution}
\end{equation}
As depicted in Figure~\ref{fig:main}, \model{} constructs this distribution through a mean anchor and a coherent state-conditioned pathwise law for member anomalies. The anchor controls the predictive center, while ensemble-conditioned residual transport learns the distribution around it. A Factorized Dynamics Operator (FDO) provides a unified parameterization for the deterministic reference underlying the anchor and the residual-flow velocity.

\begin{figure}[t]
    \centering
    \includegraphics[width=0.96\textwidth]{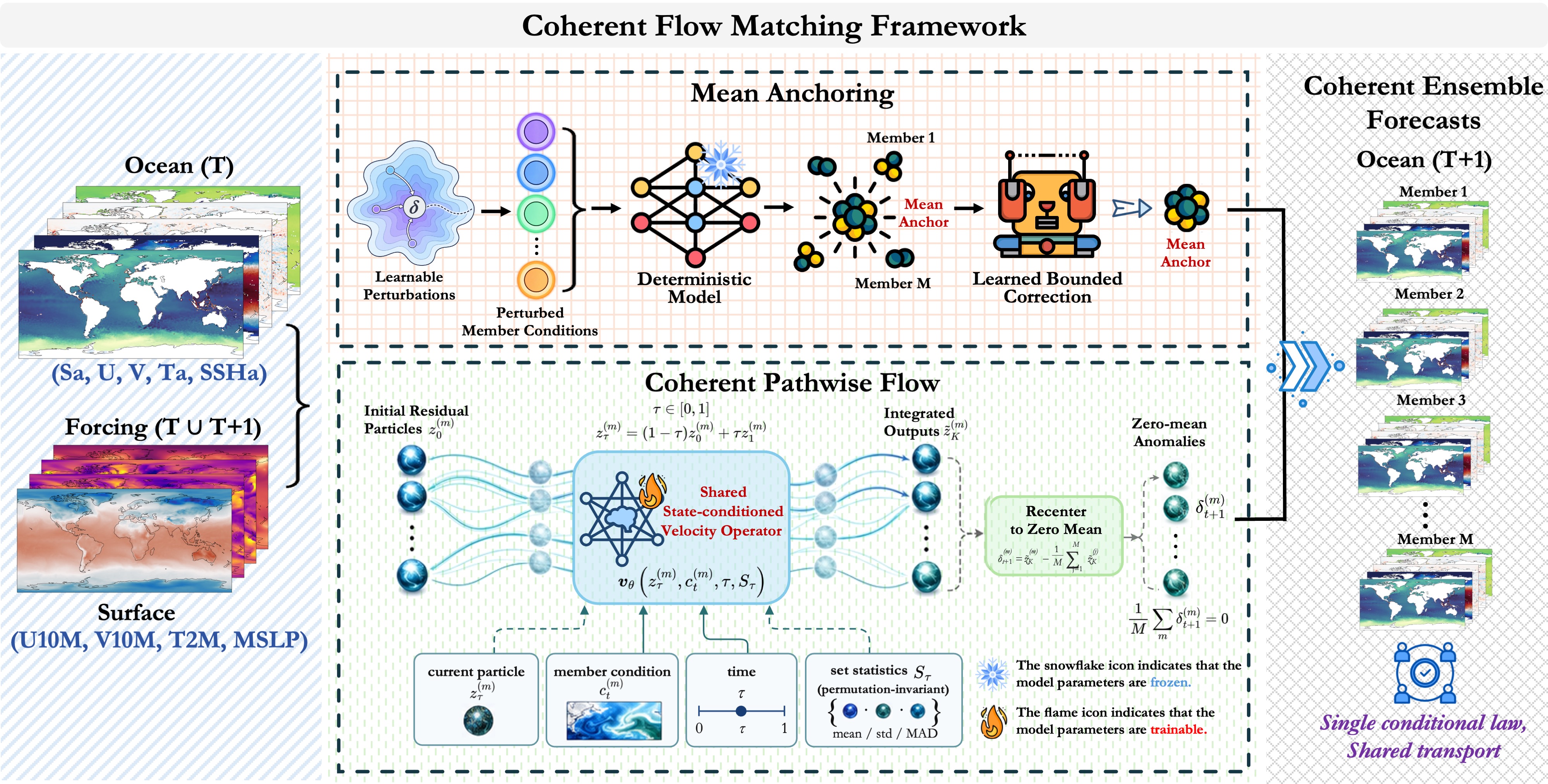}
    \vspace{-5pt}
    \caption{\textbf{Overview of \model{}.} The mean anchor controls the predictive center, while coherent state-conditioned pathwise flow jointly transports member residuals. Learned residuals refine the reference ensemble, while final recentering preserves the mean anchor. FDO parameterizes both the deterministic reference and residual velocity through a unified composition of factorized dynamics and learned state-conditioned response.}
    \label{fig:main}
    \vspace{-15pt}
\end{figure}

\subsection{Coherent Flow Matching}

Coherent flow matching couples member residual paths through ensemble-conditioned dynamics while retaining explicit control of their common center. The law acts at each autoregressive transition in residual flow time $\tau$: a deterministic reference advances the member conditions, and the learned transport determines the evaluated conditional ensemble.

\paragraph{Mean anchoring.}
Mean anchoring separates central-state accuracy from residual uncertainty. A pretrained deterministic FDO instance $f_{\mathrm{det}}$ advances member conditions $c_t^{(m)}$, formed by perturbing the ocean state while sharing the same forcing. The reference ensemble can be expressed as
\begin{equation}
    x_{\mathrm{base},t+1}^{(m)} = f_{\mathrm{det}}(c_t^{(m)}),
    \qquad
    \mu^{\mathrm{base}}_{t+1}
    =
    \frac{1}{M_b}\sum_{m=1}^{M_b}x_{\mathrm{base},t+1}^{(m)} ,
    \label{eq:base_anchor}
\end{equation}
where $M_b$ is the reference-ensemble size used to estimate $\mu^{\mathrm{base}}_{t+1}$, while $M$ is the number of residual particles and corresponding output members; $M_b=M$ at evaluation. A learned correction adapts this reference center,
\begin{equation}
    \mu_{t+1}
    =
    \mu^{\mathrm{base}}_{t+1}
    +
    \mathcal{P}_o\!\left[
    \alpha_{\mu}
    \tanh\!\left(
        \frac{\mathcal{P}_o g_{\psi}(c_t,\mu^{\mathrm{base}}_{t+1})}{\alpha_{\mu}}
    \right)\right],
    \label{eq:mean_correction}
\end{equation}
where $\alpha_\mu$ controls the correction scale and $\mathcal{P}_o$ removes each channel's spatial mean over valid ocean points. The resulting ensemble separates the mean anchor from zero-mean member anomalies,
\begin{equation}
    \hat{x}_{t+1}^{(m)} = \mu_{t+1} + \delta_{t+1}^{(m)},
    \qquad
    \frac{1}{M}\sum_{m=1}^{M}\delta_{t+1}^{(m)} = 0.
    \label{eq:forecast_decomp}
\end{equation}
This decomposition allows the mean branch to learn a central correction without letting residual transport displace the anchor.

\paragraph{Coherent state-conditioned pathwise flow.}
The residual ensemble evolves under one velocity field whose dependence on symmetric ensemble summaries couples the member paths. Let $Z_0=\{z_0^{(m)}\}_{m=1}^{M}$ be smooth normalized residual noise. The training target $Z_1$ combines a spread target derived from the verifying center error with zero-mean patterns of that error; its construction is given in Appendix~\ref{app:flow_matching_details}. For a shared interpolation time $\tau\sim\mathcal{U}(0,1)$,
\begin{equation}
    z_{\tau}^{(m)}
    =
    (1-\tau)z_0^{(m)} + \tau z_1^{(m)}, \qquad
    u_{\tau}^{(m)} = z_1^{(m)} - z_0^{(m)}.
\end{equation}
\model{} learns a shared residual velocity field $v_\theta$ by the flow matching objective
\begin{equation}
    \mathcal{L}_{\mathrm{FM}}
    =
    \mathbb{E}_{(c_t,y),\Xi,\tau}
    \left[
    \frac{1}{M}\sum_{m=1}^{M}
    \left\|
    v_{\theta}\!\left(
        z_{\tau}^{(m)}, c_t^{(m)}, \tau, S_{\tau}
    \right)
    -
    u_{\tau}^{(m)}
    \right\|_{2,o}^2
    \right],
    \label{eq:fm}
\end{equation}
where $y$ is the verifying state, $\Xi$ collects the stochastic member inputs, and $S_\tau=S(Z_\tau)$ contains the ensemble mean, standard deviation, and mean absolute deviation. These summaries are recomputed from the joint residual state, making each member's velocity responsive to the ensemble context rather than to its own condition alone. Norms use the masked-grid convention in Appendix~\ref{app:training_objectives}.

Inference advances the residual ensemble with the same shared velocity,
\begin{equation}
    z_{k+1}^{(m)}
    =
    \mathcal{P}_r\!\left[
    z_{k}^{(m)}
    +
    \Delta \tau\,
    v_{\theta}\!\left(
        z_k^{(m)}, c_t^{(m)}, \tau_k, S_k
    \right)\right],
    \qquad
    \tau_k = \left(k+\tfrac12\right)\Delta\tau,
    \quad \Delta\tau=N_\tau^{-1}.
    \label{eq:euler_transport}
\end{equation}
Here $\mathcal{P}_r$ applies the ocean mask and residual bound. The update defines a coherent member-set transport $\mathcal{T}_{\theta}^{M}$; shared parameters and symmetric context make it equivariant to jointly reordering particles and their conditions, as shown in Appendix~\ref{app:set_statistics}. Let $\tilde z_{N_\tau}^{(m)}$ denote the anomalies obtained by composing the reference-member anomalies with the transported residuals and applying spread calibration and stabilization. The final projection is
\begin{equation}
    \delta_{t+1}^{(m)}
    =
    \tilde{z}_{N_\tau}^{(m)}
    -
    \frac{1}{M}\sum_{j=1}^{M}\tilde{z}_{N_\tau}^{(j)},
    \label{eq:recenter}
\end{equation}
which fixes the empirical center at $\mu_{t+1}$ for every generated ensemble. Coherence thus resides in the jointly conditioned residual transport, with mean anchoring enforcing the center of the resulting law. Reference composition and calibration are specified in Appendix~\ref{app:flow_matching_details}.

\subsection{Factorized Dynamics Operator}

The Factorized Dynamics Operator (FDO) is a general-purpose conditional transition operator for mean-anchor and residual-velocity parameterization. It expresses each transition through factorized dynamics composed with a state-conditioned response, giving both instances a unified operator structure.

For operator input $\mathbf{X}$ and context $\mathbf{C}$, a conditional patch encoder establishes the state on which factorized dynamics act,
\begin{equation}
H_0
=
\mathcal{E}_{\theta}^{\mathrm{patch}}
\left(
\mathbf{X};\mathbf{C}
\right).
\label{eq:fdo_encoding}
\end{equation}

The transition is accumulated through interacting dynamical factors,
\begin{equation}
H_{\ell+1}
=
H_{\ell}
+
\Phi_{\ell}
\left(
H_\ell,\,
\mathcal{A}_{\ell}
\left(
\left\{
\mathcal{T}_{\ell,k}
\left(
\mathcal{P}_{\ell,k}(H_{\ell};\mathbf{C})
\right)
\right\}_{k=1}^{K}
;\,
H_{\ell},\mathbf{C}
\right)
;\mathbf{C}\right),
\qquad
\ell=0,\ldots,L-1 .
\label{eq:fdo_factorized_transition}
\end{equation}
where $\mathcal{P}_{\ell,k}$ induces factor $k$, $\mathcal{T}_{\ell,k}$ evolves it, and $\mathcal{A}_{\ell}$ combines cross-factor interactions with state-dependent selection. The update map $\Phi_\ell$ integrates this collective response into the patch state.

FDO maps this evolution to the target space through a state-conditioned response composition,
\begin{equation}
\widetilde{\mathbf{Y}}
=
\mathcal{R}_{\theta}(H_L),
\qquad
\mathcal{D}_{\theta}(\mathbf{X};\mathbf{C})
=
\left[
\mathcal{I}
+
\gamma\mathcal{S}_{\theta}(\,\cdot\mid\mathbf{X})
\right]
\left(
\widetilde{\mathbf{Y}}
\right)
=
\widetilde{\mathbf{Y}}
+
\gamma
\mathcal{S}_{\theta}
\left(
\widetilde{\mathbf{Y}},\mathbf{X}
\right).
\label{eq:fdo_closure}
\end{equation}
Here $\mathcal{R}_\theta$ restores the output field, while $\mathcal{S}_\theta$ conditions its response on both the decoded dynamics and the operator input, with learned coefficient $\gamma$. The mean and velocity instances use separate parameters and task-specific inputs; their shared composition is detailed in Appendix~\ref{app:velocity_operator}.

\subsection{Training Losses and Calibration}
\label{sec:training_losses}

Training first establishes a deterministic reference and then learns the coherent conditional ensemble around it. For normalized, masked ocean fields, the deterministic FDO instance $f_{\mathrm{det},\vartheta}$ minimizes the channel-averaged relative $L_2$ risk,
\begin{equation}
    \mathcal{L}_{\mathrm{det}}(\vartheta)
    =
    \mathbb{E}_{(c_t,y)\sim\mathcal{D}_{\mathrm{train}}}
    \left[
    \frac{1}{C_o}\sum_{q=1}^{C_o}
    \frac{\|M_{o,q}\odot(f_{\mathrm{det},\vartheta}(c_t)_q-y_q)\|_2}
         {\|M_{o,q}\odot y_q\|_2}
    \right],
    \label{eq:deterministic_pretraining}
\end{equation}
where $C_o$ is the number of ocean channels and $M_{o,q}$ the valid-ocean mask for channel $q$. The pretrained reference is then frozen. With $\widehat X=\{\hat x^{(m)}\}_{m=1}^{M}$, the probabilistic stage optimizes the joint mean--residual objective
\begin{equation}
    \mathcal{L}(\theta)
    =\mathbb{E}\!\left[
    \lambda_{\mathrm{fm}}\ell_{\mathrm{fm}}
    +\lambda_{\mathrm{mean}}\ell_{\mathrm{mean}}
    +\lambda_{\mathrm{prob}}\bigl(\mathsf{S}_M(\widehat X,y)+\eta\ell_{\mathrm{gain}}\bigr)
    +\lambda_{\mathrm{reg}}\ell_{\mathrm{reg}}
    \right].
    \label{eq:compact_training_objective}
\end{equation}
The expectation averages training transitions, stochastic member inputs, and flow time. Here $\ell_{\mathrm{fm}}$ is the sample-level integrand of Eq.~\eqref{eq:fm}; $\ell_{\mathrm{mean}}$ supervises the center and its improvement over the reference; and $\ell_{\mathrm{reg}}$ controls member error and spread calibration. Distributional supervision uses empirical ensemble CRPS,
\begin{equation}
    \mathsf S_M(\widehat X,y)
    =
    \frac{1}{M}\sum_m\|\hat x^{(m)}-y\|_{1,o}
    -\frac{1}{2M^2}\sum_{m,n}\|\hat x^{(m)}-\hat x^{(n)}\|_{1,o},
    \label{eq:crps}
\end{equation}
where $\|\cdot\|_{1,o}$ denotes the masked-grid absolute average. The score balances member--target agreement with ensemble dispersion, while $\ell_{\mathrm{gain}}$ penalizes CRPS exceeding that of the reference ensemble. Final recentering enforces the mean identity independently of optimization. Component definitions are given in Appendix~\ref{app:training_objectives}.

\subsection{Theoretical Analysis}

\begingroup
\itshape
For transition context $c$ and member conditions $C=(c^{(1)},\ldots,c^{(M)})$, the continuous-time idealization of coherent transport evolves the residual ensemble through
\begin{equation}
    \left[V_{\theta}^{M}(Z,C,\tau)\right]^{(m)}
    =
    v_{\theta}\!\left(
        z^{(m)},c^{(m)},\tau,S(Z)
    \right),
    \qquad
    \frac{\mathrm{d}Z_\tau}{\mathrm{d}\tau}
    =
    V_{\theta}^{M}(Z_\tau,C,\tau),
    \label{eq:joint_ensemble_flow}
\end{equation}
Let $\mathcal{T}_{\theta,C}^{M}$ denote either the time-one flow or the implemented finite-step transport, $\mathcal{K}_{\theta,c}^{M}$ the reference composition and calibration, and $\mathcal{R}^{M}$ the zero-mean projection. For stochastic input $\Xi=(Z_0,C)\sim\rho_0^M(\cdot\mid c)$, the resulting conditional ensemble law is
\begin{equation}
    Q_{\theta}^{M}(\cdot\mid c)
    =
    \left(G_{\theta,c}^{M}\right)_{\#}
    \rho_0^{M}(\cdot\mid c),
    \qquad
    G_{\theta,c}^{M}(\Xi)
    =
    \mathbf{1}_{M}\otimes\mu_\theta(c,\Xi)
    +
    \mathcal{R}^{M}\!\left(
        \mathcal{K}_{\theta,c}^{M}\!\left(
        \mathcal{T}_{\theta,C}^{M}(Z_0),\Xi\right)
    \right).
    \label{eq:coherent_conditional_pushforward}
\end{equation}
\begin{lemma}[Coherent conditional law]
\label{lem:coherent_transport}
Under the assumptions in Appendix~\ref{app:coherence_proof}, each ensemble-conditioned transport induces its own conditional law $Q_\theta^M$ with empirical mean $\mu_\theta(c,\Xi)$ for every stochastic input.
\end{lemma}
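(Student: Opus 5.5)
The lemma makes three claims: $G_{\theta,c}^{M}$ is a well-defined measurable map, so the pushforward in Eq.~\eqref{eq:coherent_conditional_pushforward} is a probability measure on $\mathcal{X}^M$; this law depends on the ensemble-conditioned transport $\mathcal{T}_{\theta,C}^{M}$; and every realization has empirical mean exactly $\mu_\theta(c,\Xi)$. The plan is to verify these in that order. The last claim is purely algebraic and holds pointwise in $\Xi$, so it needs no probabilistic argument.

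\textbf{Well-definedness of the transport.} For the continuous-time idealization in Eq.~\eqref{eq:joint_ensemble_flow}, I would invoke the standing assumptions: $v_\theta$ is continuous in $\tau$ and globally Lipschitz in $(z,S)$. The summaries $S(Z)$ are the mean, the standard deviation and the mean absolute deviation. The mean is linear. The other two are Lipschitz in $Z$ as norms of the centered ensemble, because the map $a\mapsto\sqrt{a^2}$ and the absolute value are $1$-Lipschitz. Hence the joint field $V_\theta^M(\cdot,C,\tau)$ is Lipschitz on $\mathcal{X}^M$. Picard--Lindel\"of then gives a unique global solution, and the time-one map $\mathcal{T}_{\theta,C}^{M}$ is continuous, hence Borel, and jointly measurable in $(Z_0,C)$.

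For the finite-step scheme in Eq.~\eqref{eq:euler_transport}, no existence question arises. $\mathcal{T}_{\theta,C}^{M}$ is a finite composition of continuous maps: the network, $S$, and the projection $\mathcal{P}_r$, which consists of masking and a bounded clip. It is therefore measurable.

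\textbf{Measurability of $G$ and the pushforward.} The map $\mathcal{K}_{\theta,c}^{M}$ (reference composition and calibration) and the mean map $\mu_\theta$ from Eq.~\eqref{eq:mean_correction} are assumed measurable; the latter is built from $\tanh$, linear projections and the network. With these, $G_{\theta,c}^{M}$ is measurable, and $Q_\theta^M=(G_{\theta,c}^{M})_\#\rho_0^M(\cdot\mid c)$ is a probability measure. Because $G$ depends on $\mathcal{T}_{\theta,C}^{M}$, each choice of transport (continuous or discrete) yields its own law.

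A brief remark covers equivariance. $S$ is permutation-invariant and $v_\theta$ is shared across members, so $V_\theta^M$ commutes with simultaneous permutations of $(Z,C)$. Uniqueness of solutions then transfers this to $\mathcal{T}$; this is the result of Appendix~\ref{app:set_statistics}. Equivariance is not needed for the mean identity, but it makes the law well defined as a law on member sets.

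\textbf{Mean identity.} $\mathcal{R}^M$ subtracts the empirical average, as in Eq.~\eqref{eq:recenter}. For any $W=(w^{(1)},\dots,w^{(M)})$,
\[
\frac1M\sum_m[\mathcal{R}^M W]^{(m)}=\frac1M\sum_m w^{(m)}-\frac1M\sum_j w^{(j)}=0.
\]
Averaging $G_{\theta,c}^{M}(\Xi)$ over members therefore gives $\mu_\theta(c,\Xi)+0$, for every $\Xi$ and independently of the transport used.

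The main obstacle I expect is the Lipschitz control of the ensemble standard deviation near a degenerate ensemble. The map $a\mapsto\sqrt{a^2}$ is Lipschitz but not differentiable at $0$, so it yields only a Lipschitz, not $C^1$, flow. Moreover, any implemented $\epsilon$-regularization $\sqrt{a^2+\epsilon}$ has to be checked against the assumptions. I would state global Lipschitz continuity of $v_\theta$ in $S$ as an assumption and use the nonexpansiveness of norms to avoid needing differentiability.
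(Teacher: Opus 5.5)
Your proposal is correct and follows essentially the same route as the paper: Picard--Lindel\"of for the continuous flow, or plain measurability of the finite Euler composition. From there, measurability of $G_{\theta,c}^{M}$ makes the pushforward a law, and the mean identity holds pointwise in $\Xi$ because the centering projection $\mathcal{R}^M$ has zero average. The differences are minor: you derive the Lipschitz bound on $V_\theta^M$ from a Lipschitz $v_\theta$ and Lipschitz summaries $S$ rather than assuming it outright, and you stop at a probability measure for each fixed $c$, whereas the paper uses joint measurability in $(c,\Xi)$ to conclude that $Q_\theta^M$ is a probability kernel in $c$.
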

\endgroup

%% file: sec/4_Experiments.tex
\section{Experiments}
We evaluate \model{} on ocean simulation and forecasting tasks in terms of ensemble-mean accuracy, probabilistic reliability, ocean-specific diagnostics, physical consistency, and model complexity.

\subsection{Experimental Setup}

\noindent\textbf{\ding{182} Dataset}~ We evaluate \model{} on GLORYS12 reanalysis with 93 ocean variables: salinity, temperature, and two velocity components at 23 depths, plus sea surface height. Four ERA5 forcing variables at two consecutive time steps provide eight atmospheric input channels. Data are processed at daily temporal and 1° spatial resolution, using daily-mean ocean states and forcings sampled at 12:00 UTC. We use 1993--2017 for training, 2018--2019 for validation, and 2020 for testing, with additional generalization tests on 2024 data. Further data details refer to the Appendix.

\noindent\textbf{\ding{183} Evaluation Metrics}~ We evaluate ensemble-mean skill with RMSE and ACC, and probabilistic quality with empirical ensemble CRPS, spread--skill ratio (SSR), rank histograms, and coverage calibration curves. RankDev and CalibErr quantify deviations from rank uniformity and nominal coverage, respectively. Metric definitions appear in Appendix~\ref{app:metrics}.

\noindent\textbf{\ding{184} Baselines}~ We compare \model{} with a diverse set of baselines: the general spatiotemporal model \textit{\textbf{SimVP}}~\citep{tan2025simvpv2}; Earth-system architectures \textit{\textbf{GraphCast}}~\citep{lam2023learning} and \textit{\textbf{CirT}}~\citep{liu2025cirt}; and ocean-specific models \textit{\textbf{WenHai}}~\citep{cui2025forecasting}, \textit{\textbf{NeuralOM}}~\citep{gao2026neuralom}, and \textit{\textbf{DSLCast}}~\citep{wu2026pnpcorrector}. We retrain SimVP, GraphCast, and CirT under the same protocol and use official pretrained weights for WenHai, NeuralOM, and DSLCast.

\subsection{Probabilistic Ocean Simulation}
\begin{table*}[h!]
    \centering
    \caption{Quantitative ocean simulation results over 93 variables. Lower RMSE/CRPS, higher ACC, and SSR closer to 1 are better. Values are mean{\tiny$\pm$std}; red/orange/blue mark the best/second-/third-best results. $*$ denotes Perlin-noise ensemble modeling for deterministic baselines.}
    \vspace{-5pt}
    \small
    \label{tab:mainres}
    \setlength{\tabcolsep}{1.7pt}
    \begin{tabular}{l|cccc|cccc}
        \toprule
        \multirow{2}{*}{Method}
        & \multicolumn{4}{c|}{90-Day}
        & \multicolumn{4}{c}{120-Day} \\
        \cmidrule(lr){2-5} \cmidrule(lr){6-9}
        & RMSE $\downarrow$ & ACC $\uparrow$ & CRPS $\downarrow$ & SSR $\rightarrow 1$
        & RMSE $\downarrow$ & ACC $\uparrow$ & CRPS $\downarrow$ & SSR $\rightarrow 1$ \\
        \midrule
        SimVP      & 0.932{\tiny$\pm$0.005} & 0.088{\tiny$\pm$0.006} & \textemdash & \textemdash
                  & 0.932{\tiny$\pm$0.012} & 0.058{\tiny$\pm$0.005} & \textemdash & \textemdash \\
        SimVP*     & 0.811{\tiny$\pm$0.006} & 0.104{\tiny$\pm$0.005} & \thirdcell{0.485{\tiny$\pm$0.004}} & 0.582{\tiny$\pm$0.012}
                  & 0.807{\tiny$\pm$0.010} & 0.073{\tiny$\pm$0.006} & \thirdcell{0.489{\tiny$\pm$0.007}} & 0.602{\tiny$\pm$0.015} \\
        GraphCast  & 0.921{\tiny$\pm$0.005} & 0.198{\tiny$\pm$0.003} & \textemdash & \textemdash
                  & 1.031{\tiny$\pm$0.004} & 0.139{\tiny$\pm$0.003} & \textemdash & \textemdash \\
        GraphCast* & 0.854{\tiny$\pm$0.003} & 0.218{\tiny$\pm$0.002} & 0.525{\tiny$\pm$0.002} & 0.452{\tiny$\pm$0.005}
                  & 0.900{\tiny$\pm$0.006} & 0.165{\tiny$\pm$0.003} & 0.547{\tiny$\pm$0.003} & 0.596{\tiny$\pm$0.007} \\
        CirT       & 2.823{\tiny$\pm$0.022} & 0.016{\tiny$\pm$0.003} & \textemdash & \textemdash
                  & 2.902{\tiny$\pm$0.024} & 0.009{\tiny$\pm$0.004} & \textemdash & \textemdash \\
        CirT*      & 2.415{\tiny$\pm$0.036} & 0.019{\tiny$\pm$0.002} & 1.087{\tiny$\pm$0.019} & \thirdcell{0.624{\tiny$\pm$0.026}}
                  & 2.455{\tiny$\pm$0.036} & 0.010{\tiny$\pm$0.005} & 1.127{\tiny$\pm$0.018} & \thirdcell{0.648{\tiny$\pm$0.029}} \\
        WenHai     & 0.875{\tiny$\pm$0.006} & 0.291{\tiny$\pm$0.006} & \textemdash & \textemdash
                  & 0.939{\tiny$\pm$0.009} & 0.231{\tiny$\pm$0.011} & \textemdash & \textemdash \\
        NeuralOM   & \secondcell{0.682{\tiny$\pm$0.009}} & \secondcell{0.393{\tiny$\pm$0.009}} & \textemdash & \textemdash
                  & \secondcell{0.710{\tiny$\pm$0.005}} & \thirdcell{0.319{\tiny$\pm$0.007}} & \textemdash & \textemdash \\
        DSLCast    & 0.880{\tiny$\pm$0.007} & 0.333{\tiny$\pm$0.009} & \textemdash & \textemdash
                  & 0.915{\tiny$\pm$0.009} & 0.294{\tiny$\pm$0.013} & \textemdash & \textemdash \\
        DSLCast*   & \thirdcell{0.749{\tiny$\pm$0.005}} & \thirdcell{0.385{\tiny$\pm$0.007}} & \secondcell{0.424{\tiny$\pm$0.003}} & \secondcell{0.659{\tiny$\pm$0.005}}
                  & \thirdcell{0.737{\tiny$\pm$0.006}} & \secondcell{0.362{\tiny$\pm$0.011}} & \secondcell{0.410{\tiny$\pm$0.003}} & \secondcell{0.779{\tiny$\pm$0.007}} \\
        \midrule
        \model{}   & \bestcell{0.681{\tiny$\pm$0.005}} & \bestcell{0.421{\tiny$\pm$0.005}} & \bestcell{0.370{\tiny$\pm$0.002}} & \bestcell{0.707{\tiny$\pm$0.004}}
                  & \bestcell{0.670{\tiny$\pm$0.003}} & \bestcell{0.400{\tiny$\pm$0.007}} & \bestcell{0.368{\tiny$\pm$0.002}} & \bestcell{0.784{\tiny$\pm$0.005}} \\
        \bottomrule
    \end{tabular}
    \vspace{-10pt}
\end{table*}

\begin{figure}[h!]
    \centering
    \includegraphics[width=\linewidth]{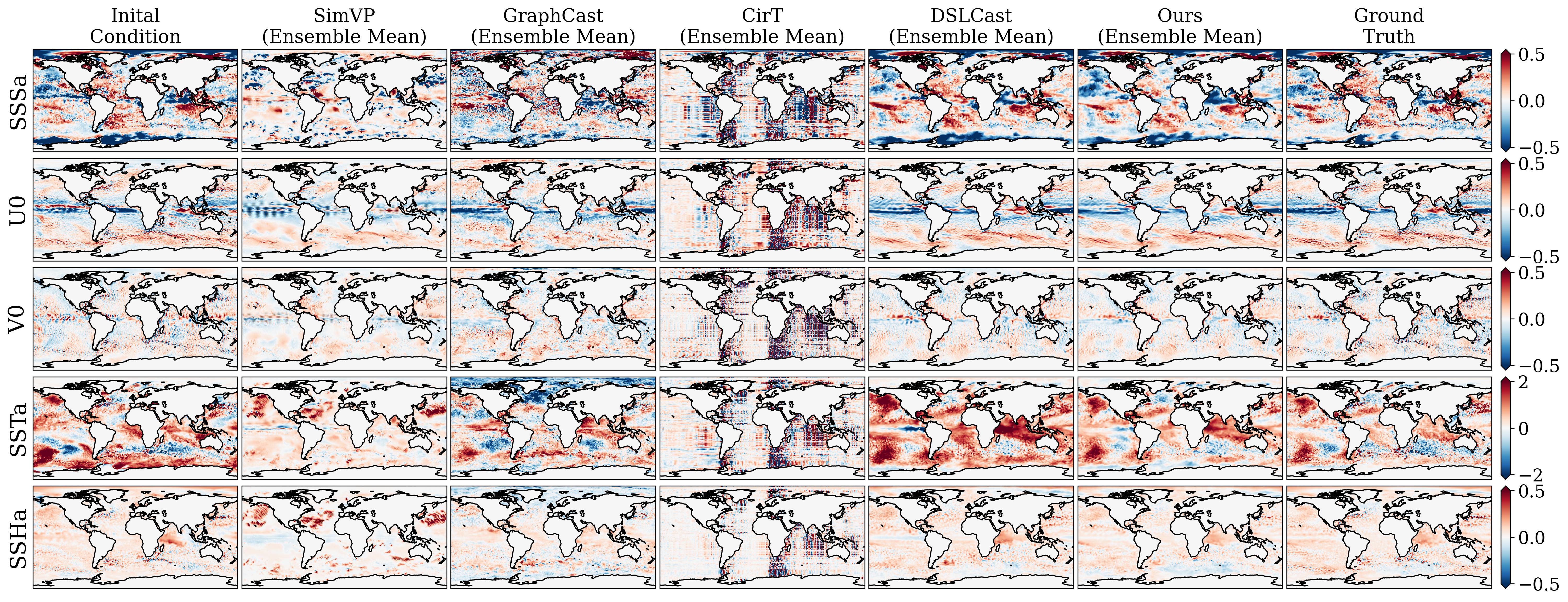}
    \vspace{-20pt}
    \caption{120-day simulation results for several important surface variables. Our method achieves better physical consistency and yields results that are closest to the ground truth.}
    \vspace{-10pt}
    \label{fig:visual_120d}
\end{figure}

\begin{figure*}[t!]
    \centering
    \includegraphics[width=\textwidth]{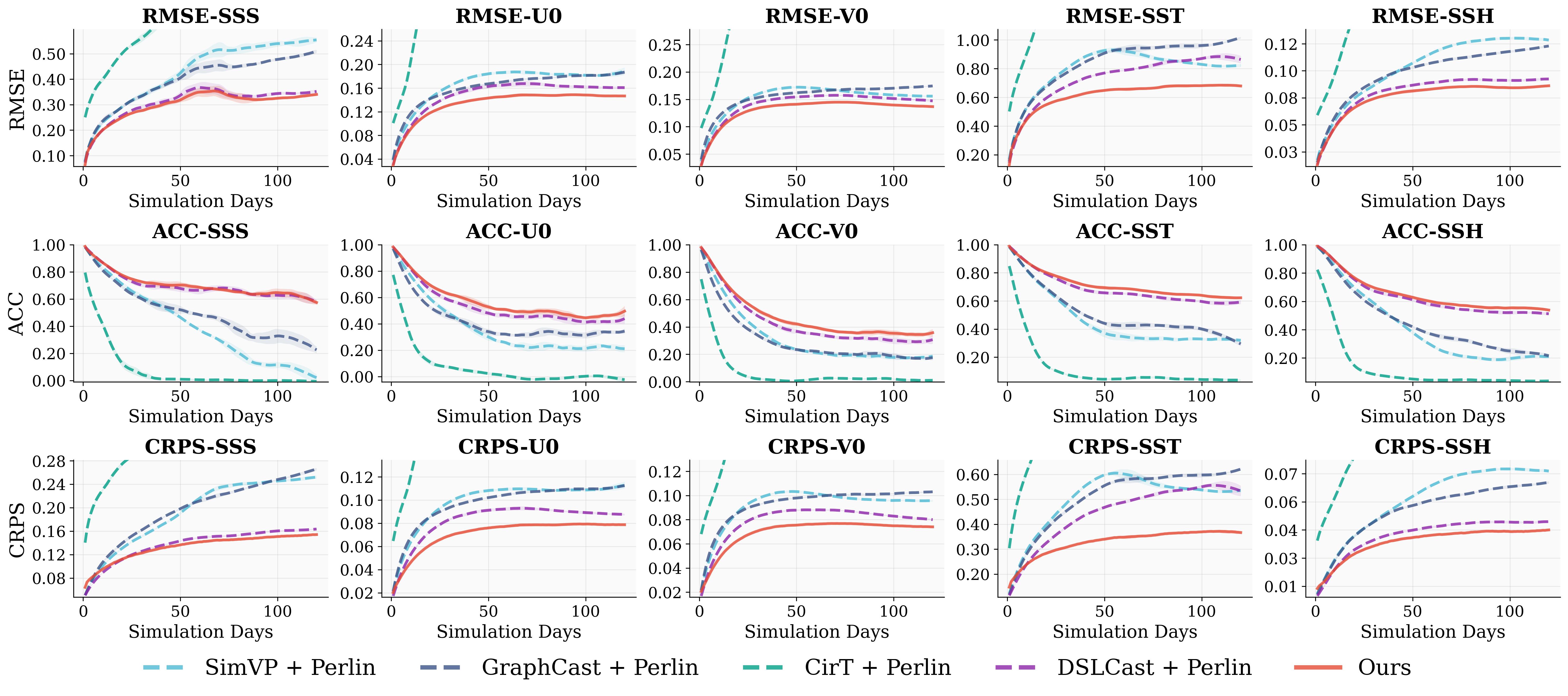}
    \vspace{-15pt}
    \caption{The RMSE, ACC and CRPS results of several important ocean surface variables.}
    \label{fig:single_vars}
    \vspace{-20pt}
\end{figure*}

\begin{figure*}[ht!]
    \centering
    \includegraphics[width=\textwidth]{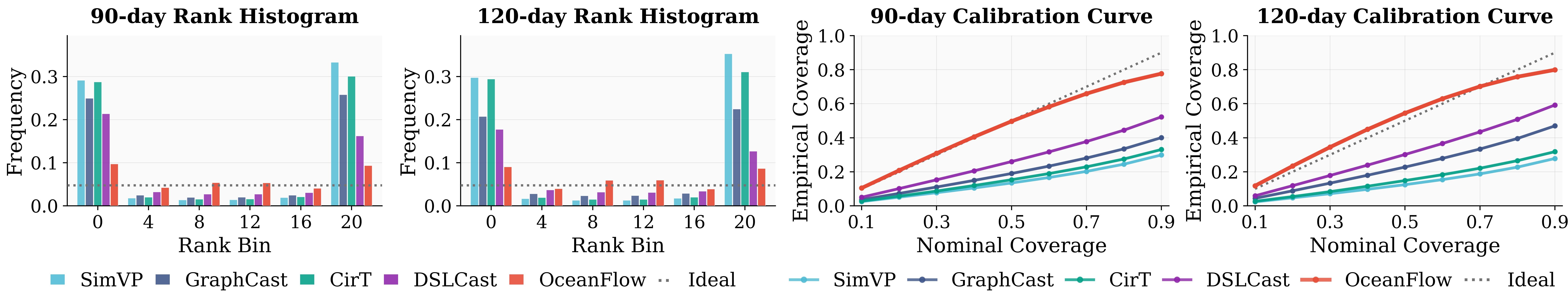}
    \vspace{-15pt}
    \caption{Rank histograms and calibration curves of ocean simulations.}
\label{fig:rank_histograms_calibration_simulation}
    \vspace{-10pt}
\end{figure*}
Table~\ref{tab:mainres} reports ocean simulation results over 20 initial conditions (ICs) under prescribed atmospheric forcing, separating ocean-state uncertainty from atmospheric forecasting error. \model{} achieves the lowest ensemble-mean RMSE and highest ACC, together with the lowest CRPS, across all lead times. Although Perlin perturbations improve ensemble-mean RMSE over the corresponding deterministic trajectories, substantial gaps in ACC and CRPS remain relative to \model{}, particularly at longer lead times. This suggests that perturbation-induced spread alone may not adequately represent the state-dependent uncertainty of ocean dynamics. These results support the central premise of coherent flow matching: jointly refining the conditional residual distribution improves probabilistic quality while preserving an accurate ensemble center. Figure~\ref{fig:single_vars} shows that the gains extend across important sea-surface variables throughout the simulation range. The 120-day fields in Figure~\ref{fig:visual_120d} provide a complementary spatial comparison: the ensemble mean of \model{} remains sharp and spatially organized, while several baselines lose fine-scale structure or exhibit pronounced spatial distortions. Figure~\ref{fig:rank_histograms_calibration_simulation} and Table~\ref{tab:reliability_diag} further assess ensemble reliability. \model{} yields rank distributions closer to uniform and empirical coverage closer to nominal levels, with the lowest RankDev and CalibErr at all simulation horizons.

\subsection{Probabilistic Ocean Forecasting}

\begin{table*}[t!]
    \centering
    \caption{Quantitative reliability diagnostics for ocean simulation and forecasting. RankDev is computed as the L2 deviation using all 21 rank bins. Lower values are better.}
    \vspace{-10pt}
    \small
    \label{tab:reliability_diag}
    \setlength{\tabcolsep}{1.9pt}
    \begin{tabular}{l|cc|cc||cc|cc}
        \toprule
        \multirow{3}{*}{Method}
        & \multicolumn{4}{c||}{Simulation}
        & \multicolumn{4}{c}{Forecasting} \\
        \cmidrule(lr){2-5} \cmidrule(lr){6-9}
        & \multicolumn{2}{c|}{90-Day}
        & \multicolumn{2}{c||}{120-Day}
        & \multicolumn{2}{c|}{20-Day}
        & \multicolumn{2}{c}{30-Day} \\
        \cmidrule(lr){2-3} \cmidrule(lr){4-5} \cmidrule(lr){6-7} \cmidrule(lr){8-9}
        & RankDev$\downarrow$ & CalibErr$\downarrow$
        & RankDev$\downarrow$ & CalibErr$\downarrow$
        & RankDev$\downarrow$ & CalibErr$\downarrow$
        & RankDev$\downarrow$ & CalibErr$\downarrow$ \\
        \midrule
        SimVP*     
        & 0.396{\tiny$\pm$0.005} & 0.355{\tiny$\pm$0.003}
        & 0.416{\tiny$\pm$0.006} & 0.366{\tiny$\pm$0.004}
        & 0.328{\tiny$\pm$0.002} & \thirdcell{0.292{\tiny$\pm$0.002}}
        & 0.338{\tiny$\pm$0.002} & \thirdcell{0.307{\tiny$\pm$0.002}} \\
        GraphCast* 
        & \thirdcell{0.309{\tiny$\pm$0.002}} & \thirdcell{0.299{\tiny$\pm$0.001}}
        & \thirdcell{0.254{\tiny$\pm$0.002}} & \thirdcell{0.261{\tiny$\pm$0.002}}
        & \thirdcell{0.326{\tiny$\pm$0.002}} & 0.294{\tiny$\pm$0.001}
        & 0.352{\tiny$\pm$0.002} & 0.317{\tiny$\pm$0.001} \\
        CirT*      
        & 0.368{\tiny$\pm$0.010} & 0.337{\tiny$\pm$0.008}
        & 0.381{\tiny$\pm$0.009} & 0.343{\tiny$\pm$0.009}
        & 0.426{\tiny$\pm$0.004} & 0.372{\tiny$\pm$0.002}
        & 0.390{\tiny$\pm$0.003} & 0.351{\tiny$\pm$0.002} \\
        DSLCast*   
        & \secondcell{0.215{\tiny$\pm$0.002}} & \secondcell{0.230{\tiny$\pm$0.002}}
        & \secondcell{0.163{\tiny$\pm$0.003}} & \secondcell{0.189{\tiny$\pm$0.002}}
        & \secondcell{0.288{\tiny$\pm$0.003}} & \secondcell{0.261{\tiny$\pm$0.003}}
        & \secondcell{0.306{\tiny$\pm$0.002}} & \secondcell{0.281{\tiny$\pm$0.002}} \\
        \midrule
        \model{}   
        & \bestcell{0.088{\tiny$\pm$0.001}} & \bestcell{0.032{\tiny$\pm$0.001}}
        & \bestcell{0.093{\tiny$\pm$0.001}} & \bestcell{0.041{\tiny$\pm$0.001}}
        & \bestcell{0.079{\tiny$\pm$0.002}} & \bestcell{0.104{\tiny$\pm$0.003}}
        & \bestcell{0.106{\tiny$\pm$0.002}} & \bestcell{0.124{\tiny$\pm$0.003}} \\
        \bottomrule
    \end{tabular}
    \vspace{-10pt}
\end{table*}

\begin{table*}[t!]
    \centering
    \caption{Probabilistic ocean forecasting results for 93 ocean variables.}
    \vspace{-10pt}
    \small
    \label{tab:forecasting}
    \setlength{\tabcolsep}{2pt}
    \begin{tabular}{l|cccc|cccc}
        \toprule
        \multirow{2}{*}{Method}
        & \multicolumn{4}{c|}{20-Day}
        & \multicolumn{4}{c}{30-Day} \\
        \cmidrule(lr){2-5} \cmidrule(lr){6-9}
        & RMSE $\downarrow$ & ACC $\uparrow$ & CRPS $\downarrow$ & SSR $\rightarrow 1$
        & RMSE $\downarrow$ & ACC $\uparrow$ & CRPS $\downarrow$ & SSR $\rightarrow 1$ \\
        \midrule
        SimVP*     
        & \thirdcell{0.667{\tiny$\pm$0.006}} & \thirdcell{0.602{\tiny$\pm$0.004}} & \thirdcell{0.408{\tiny$\pm$0.003}} & 0.221{\tiny$\pm$0.002}
        & 0.774{\tiny$\pm$0.008} & \thirdcell{0.465{\tiny$\pm$0.007}} & 0.475{\tiny$\pm$0.004} & 0.271{\tiny$\pm$0.005} \\
        GraphCast* 
        & 0.684{\tiny$\pm$0.005} & 0.523{\tiny$\pm$0.005} & 0.414{\tiny$\pm$0.003} & 0.258{\tiny$\pm$0.002}
        & \thirdcell{0.749{\tiny$\pm$0.004}} & 0.395{\tiny$\pm$0.005} & \thirdcell{0.461{\tiny$\pm$0.002}} & 0.269{\tiny$\pm$0.004} \\
        CirT*      
        & 1.640{\tiny$\pm$0.020} & 0.096{\tiny$\pm$0.003} & 0.787{\tiny$\pm$0.012} & \secondcell{0.354{\tiny$\pm$0.008}}
        & 1.984{\tiny$\pm$0.037} & 0.044{\tiny$\pm$0.003} & 0.903{\tiny$\pm$0.016} & \secondcell{0.434{\tiny$\pm$0.012}} \\
        DSLCast*   
        & \secondcell{0.624{\tiny$\pm$0.008}} & \secondcell{0.641{\tiny$\pm$0.008}} & \secondcell{0.360{\tiny$\pm$0.004}} & \thirdcell{0.324{\tiny$\pm$0.005}}
        & \secondcell{0.712{\tiny$\pm$0.008}} & \secondcell{0.521{\tiny$\pm$0.007}} & \secondcell{0.416{\tiny$\pm$0.004}} & \thirdcell{0.357{\tiny$\pm$0.006}} \\
        \midrule
        \model{}   
        & \bestcell{0.589{\tiny$\pm$0.007}} & \bestcell{0.660{\tiny$\pm$0.006}} & \bestcell{0.311{\tiny$\pm$0.004}} & \bestcell{0.514{\tiny$\pm$0.006}}
        & \bestcell{0.661{\tiny$\pm$0.008}} & \bestcell{0.550{\tiny$\pm$0.007}} & \bestcell{0.357{\tiny$\pm$0.003}} & \bestcell{0.483{\tiny$\pm$0.007}} \\
        \bottomrule
    \end{tabular}
    \vspace{-10pt}
\end{table*}

\begin{figure*}[t!]
    \centering
    \includegraphics[width=\textwidth]{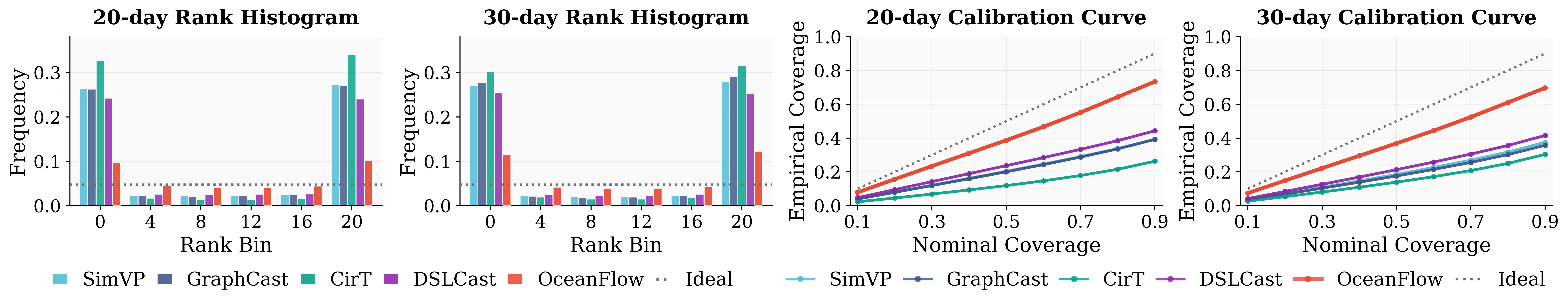}
    \vspace{-20pt}
    \caption{Rank histograms and calibration curves of probabilistic ocean forecasting.}
\label{fig:forecast_rank_histograms_calibration_selected_bins_bar}
    \vspace{-20pt}
\end{figure*}

To assess \model{}'s real-world application ability, we drive the ocean model with atmospheric forecasts produced by OneForecast~\citep{gao2025oneforecast}. OneForecast is initialized from the atmospheric state paired with each ocean IC and rolled out at 6-hour resolution to provide forcings. The results in Table~\ref{tab:forecasting}, averaged over 20 ICs, show that \model{} improves both ensemble-mean skill and CRPS at 20-day and 30-day lead times. The reliability gains observed in simulation persist under predicted forcing: Figure~\ref{fig:forecast_rank_histograms_calibration_selected_bins_bar} and Table~\ref{tab:reliability_diag} (right) show more uniform rank histograms and better-calibrated uncertainty estimates than the perturbation-based baselines.

\subsection{Generalization and ENSO Diagnosis}

We evaluate out-of-sample generalization in 2024 using 10 ICs. Global metrics are averaged over 93 variables, while ENSO metrics are computed from the Niño-3.4 SST anomaly index over the same ICs. We compare against DSLCast+Perlin, the strongest perturbation-based ensemble baseline in the main simulation comparison. Table~\ref{tab:generalization_enso_2024} shows improvements across global simulation metrics and Niño-3.4 RMSE and CRPS at both 90-day and 120-day horizons. These gains extend beyond the 2020 test year, supporting generalization in both global ocean states and large-scale tropical Pacific variability. As a preliminary extension toward coupled forecasting, we drive \model{} with atmospheric states generated by an independently running two-way PnP-Corrector system~\citep{wu2026pnpcorrector}. At the 90-day horizon, this configuration improves all reported global and Niño-3.4 metrics relative to the one-way forcing configuration. These findings suggest that end-to-end probabilistic two-way coupling could further improve ocean forecasting.

\begin{table*}[t!]
    \centering
    \caption{Out-of-sample 2024 simulation and ENSO diagnosis. The best results are shown in \textbf{bold}.}
    \vspace{-10pt}
    \label{tab:generalization_enso_2024}
    \setlength{\tabcolsep}{1.6pt}
    \small
    \begin{tabular}{llcccccc|cc}
        \toprule
        \multirow{2}{*}{Lead} & \multirow{2}{*}{Model}
        & \multicolumn{6}{c|}{Global ocean simulation}
        & \multicolumn{2}{c}{Niño-3.4 diagnosis} \\
        \cmidrule(lr){3-8} \cmidrule(lr){9-10}
        & & RMSE$\downarrow$ & ACC$\uparrow$ & CRPS$\downarrow$ & RankDev$\downarrow$ & SSR$\rightarrow 1$ & CalibErr$\downarrow$
        & RMSE$\downarrow$ & CRPS$\downarrow$ \\
        \midrule
        90d & DSLCast* & 0.784{\tiny$\pm$0.003} & 0.399{\tiny$\pm$0.004} & 0.449{\tiny$\pm$0.001} & 0.216{\tiny$\pm$0.001} & 0.665{\tiny$\pm$0.004} & 0.232{\tiny$\pm$0.001} & 0.6181 & 0.5007 \\
        90d & \model{} & \textbf{0.698{\tiny$\pm$0.003}} & \textbf{0.430{\tiny$\pm$0.003}} & \textbf{0.383{\tiny$\pm$0.001}} & \textbf{0.095{\tiny$\pm$0.001}} & \textbf{0.744{\tiny$\pm$0.001}} & \textbf{0.034{\tiny$\pm$0.000}} & \textbf{0.1451} & \textbf{0.0901} \\
        120d & DSLCast* & 0.779{\tiny$\pm$0.003} & 0.377{\tiny$\pm$0.003} & 0.435{\tiny$\pm$0.001} & 0.159{\tiny$\pm$0.001} & 0.790{\tiny$\pm$0.005} & 0.188{\tiny$\pm$0.001} & 0.7977 & 0.6879 \\
        120d & \model{} & \textbf{0.686{\tiny$\pm$0.003}} & \textbf{0.416{\tiny$\pm$0.003}} & \textbf{0.382{\tiny$\pm$0.001}} & \textbf{0.102{\tiny$\pm$0.001}} & \textbf{0.836{\tiny$\pm$0.005}} & \textbf{0.054{\tiny$\pm$0.001}} & \textbf{0.2304} & \textbf{0.1496} \\
        \bottomrule
    \end{tabular}
    \vspace{-10pt}
\end{table*}

\begin{table*}[t!]
    \centering
    \caption{Out-of-sample 2024 probabilistic ocean forecasting and ENSO prediction results.}
    \vspace{-10pt}
    \label{tab:forecasting_enso_2024}
    \setlength{\tabcolsep}{0.9pt}
    \small
    \begin{tabular}{lcccccc|cc}
        \toprule
        \multirow{2}{*}{Model}
        & \multicolumn{6}{c|}{Global ocean forecasting}
        & \multicolumn{2}{c}{Niño-3.4 diagnosis} \\
        \cmidrule(lr){2-7} \cmidrule(lr){8-9}
        & RMSE$\downarrow$ & ACC$\uparrow$ & CRPS$\downarrow$ & RankDev$\downarrow$ & SSR$\rightarrow 1$ & CalibErr$\downarrow$
        & RMSE$\downarrow$ & CRPS$\downarrow$ \\
        \midrule
        DSLCast* one-way
        & 0.867{\tiny$\pm$0.027} & 0.258{\tiny$\pm$0.028} & 0.495{\tiny$\pm$0.013} & 0.245{\tiny$\pm$0.009} & 0.605{\tiny$\pm$0.020} & 0.253{\tiny$\pm$0.006}
        & 0.9736 & 0.7933 \\
        \model{} one-way
        & 0.796{\tiny$\pm$0.027} & 0.257{\tiny$\pm$0.033} & 0.436{\tiny$\pm$0.012} & 0.134{\tiny$\pm$0.005} & 0.678{\tiny$\pm$0.019} & 0.057{\tiny$\pm$0.005}
        & 0.9118 & 0.7173 \\
        \model{} two-way
        & \textbf{0.762{\tiny$\pm$0.014}} & \textbf{0.327{\tiny$\pm$0.013}} & \textbf{0.419{\tiny$\pm$0.006}} & \textbf{0.116{\tiny$\pm$0.002}} & \textbf{0.699{\tiny$\pm$0.009}} & \textbf{0.045{\tiny$\pm$0.001}}
        & \textbf{0.6898} & \textbf{0.4643} \\
        \bottomrule
    \end{tabular}
    \vspace{-10pt}
\end{table*}

\begin{figure*}[t!]
    \centering
    \includegraphics[width=\textwidth]{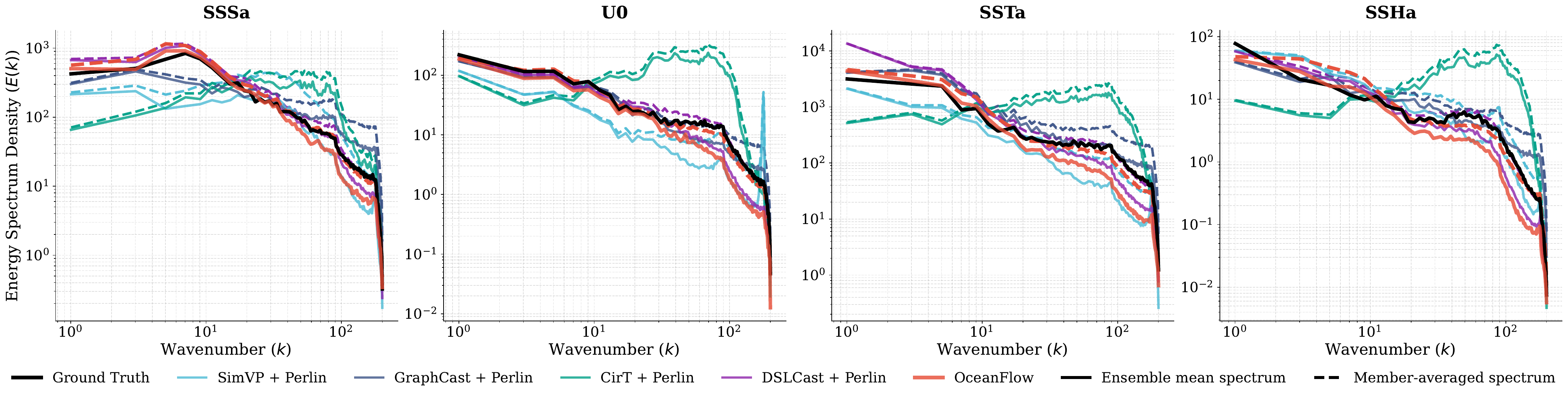}
    \vspace{-20pt}
    \caption{Spectral fidelity of ensemble mean and member-averaged 120-day ocean simulations}
    \vspace{-20pt}
    \label{fig:spectrum_120d}
\end{figure*}
\subsection{Analysis of Physical Consistency}

\begingroup
\setlength{\intextsep}{5pt}
\setlength{\columnsep}{10pt}
\begin{wrapfigure}{r}{0.55\textwidth}
    \vspace{-5pt}
    \centering
    \includegraphics[width=\linewidth]{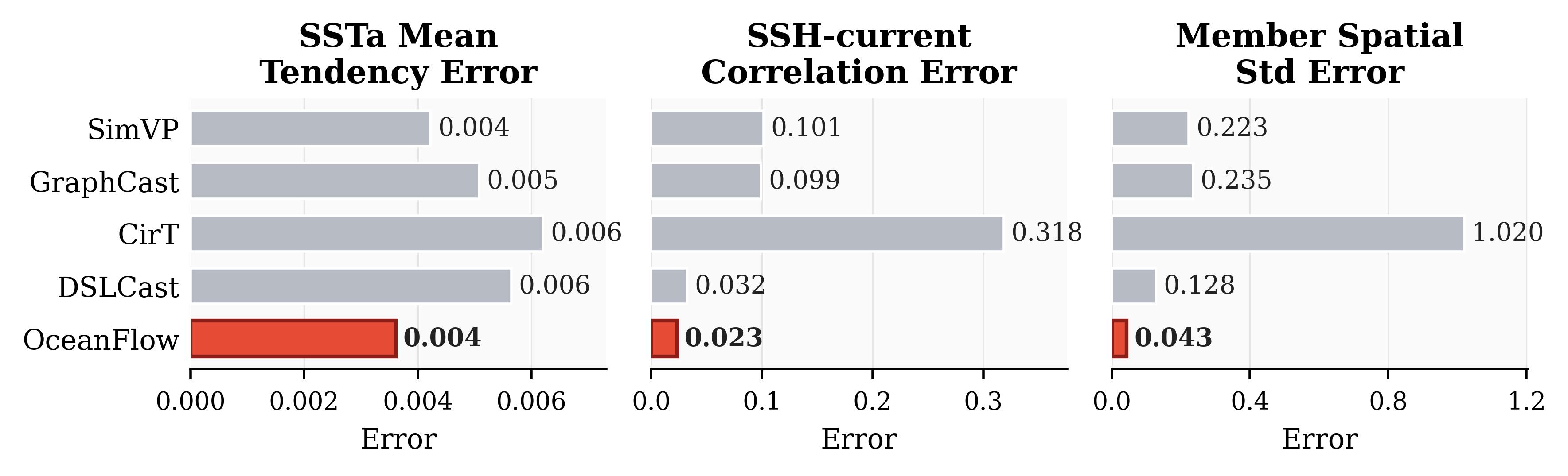}
    \setlength{\abovecaptionskip}{4pt}
    \setlength{\belowcaptionskip}{0pt}
    \vspace{-10pt}
    \caption{Physical diagnostics of the generated ensemble members and ensemble mean.}
    \vspace{-5pt}
    \label{fig:physical_diagnostics_compact}
\end{wrapfigure}
Aggregate scores alone do not fully reveal whether the generated ensemble members are physically plausible. Moreover, in a well-formed ensemble system, the ensemble mean should not be interpreted as a single realized ocean state: as lead time increases, averaging over plausible but phase-shifted members can naturally smooth unpredictable high-wavenumber components\citep{leith1974theoretical}. We therefore separate ensemble-mean structure from member-level realism in Figure~\ref{fig:spectrum_120d}.  For readability, we visualize representative surface variables covering typical near-surface dynamical and thermodynamic ocean processes. Solid lines show spectra of the ensemble mean, while dashed lines show member-averaged spectra computed by first evaluating each of the 20 ensemble members separately and then averaging their spectra over members and 20 ICs. The member-averaged spectra of \model{} are overall closer to the ground truth across these variables, indicating that individual ensemble members preserve realistic multiscale ocean variability. Meanwhile, the ensemble-mean spectra show larger deviations from the ground truth at high wavenumbers. This behavior is consistent with a well-calibrated ensemble, whose mean averages out unpredictable spectral components rather than reproducing the full spectrum of a single realization. In addition to the spectral comparison, Figure~\ref{fig:physical_diagnostics_compact} reports three targeted physical diagnostics at the 120-day simulation horizon, covering SSTa mean tendency consistency, SSH-current multivariate consistency, and member-level spatial variability. \model{} achieves the lowest error on all three diagnostics, further indicating that it better preserves large-scale thermal evolution, dynamical SSH-current relationships, and realistic spatial variability of individual ensemble members.
\par
\endgroup

\subsection{Ablation Study}

To validate the architectural design of \model{}, we conduct ablation studies through 90-day simulations over 10 ICs, as reported in Table~\ref{tab:oceanflow_ablation}. All variants are trained same as the main method. The deterministic variants are compared against FDO: (1) \textit{w/o Factorized Dynamics} replaces the factorized dynamics block with a plain convolutional block, substantially degrading simulation skill. (2) \textit{w/o State Response} removes the state-conditioned response, leading to poorer accuracy and highlighting its contribution to the factorized transition. (3) \textit{w/o Adaptive State Response} substitutes a fixed persistence addition for the learned response, partially recovering performance but remaining inferior to FDO. This supports adapting the state response rather than prescribing a fixed additive connection. The probabilistic comparisons further isolate the role of coherent residual transport: (4) \textit{FDO+Perlin} uses the same deterministic backbone with prescribed perturbations but yields poorer probabilistic scores than \model{}, showing that the gains cannot be attributed to the backbone alone. (5) \textit{w/o mean correction/re-centering} degrades both ensemble-mean accuracy and reliability, supporting explicit control of the predictive center. (6) \textit{w/o learned pathwise flow} also reduces probabilistic performance, demonstrating the benefit of learned residual transport beyond the perturbation reference. Together, these results support the complementary roles of FDO, mean anchoring, and coherent residual transport in producing accurate and reliable ocean ensembles.

\subsection{Training Protocol and Efficiency Analysis}

\begingroup
\setlength{\intextsep}{5pt}
\setlength{\columnsep}{6pt}
\begin{wraptable}{r}{0.52\textwidth}
\vspace{-5pt}
    \centering
    \caption{Parameter comparison for retrained models.}
    \vspace{-10pt}
    \label{tab:params}
    \setlength{\tabcolsep}{2.2pt}
    \small
    \begin{tabular}{l|cccc}
        \toprule
        Models & SimVP & GraphCast & CirT & OceanFlow \\
        \midrule
        Params (M) & 78.54 & 40.82 & 61.56 & 37.45 \\
        \bottomrule
    \end{tabular}
    \vspace{-2pt}
\end{wraptable}

We train SimVP, GraphCast, CirT, and the deterministic FDO backbone for 50 epochs on 8 NVIDIA A100 GPUs under the same framework. The learning rate starts at $10^{-3}$ and decays to zero with a cosine schedule. For \model{}, residual transport is trained for 3 additional epochs using $M=10$ residual particles per sample and $M_b=20$ reference members for mean anchoring. At inference, both the reference and transported ensembles contain 20 members ($M_b=M=20$). Table~\ref{tab:params} restricts the parameter comparison to models retrained under the common protocol; official pretrained baselines use different training configurations, \model{} has the fewest trainable parameters.
\par
\endgroup

\begin{table*}[t]
    \centering
    \caption{Ablation results of OceanFlow. The best results are shown in \textbf{bold}.}
    \vspace{-5pt}
    \small
    \label{tab:oceanflow_ablation}
    \setlength{\tabcolsep}{1.8pt}
    \begin{tabular}{l|cccccc}
        \toprule
        Variants & RMSE$\downarrow$ & ACC$\uparrow$ & CRPS$\downarrow$ & RankDev$\downarrow$ & SSR$\rightarrow 1$ & CalibErr$\downarrow$ \\
        \midrule
        w/o Factorized Dynamics
        & 1.054{\tiny$\pm$0.008}
        & 0.188{\tiny$\pm$0.009}
        & \textemdash
        & \textemdash
        & \textemdash
        & \textemdash \\
        w/o State Response
        & 0.934{\tiny$\pm$0.062}
        & 0.072{\tiny$\pm$0.090}
        & \textemdash
        & \textemdash
        & \textemdash
        & \textemdash \\
        w/o Adaptive State Response
        & 0.844{\tiny$\pm$0.007}
        & 0.322{\tiny$\pm$0.010}
        & \textemdash
        & \textemdash
        & \textemdash
        & \textemdash \\
        FDO
        & 0.825{\tiny$\pm$0.009}
        & 0.352{\tiny$\pm$0.007}
        & \textemdash
        & \textemdash
        & \textemdash
        & \textemdash \\
        FDO+Perlin
        & 0.715{\tiny$\pm$0.005}
        & 0.406{\tiny$\pm$0.003}
        & 0.403{\tiny$\pm$0.002}
        & 0.204{\tiny$\pm$0.001}
        & 0.625{\tiny$\pm$0.004}
        & 0.225{\tiny$\pm$0.001} \\
        w/o mean correction/re-centering
        & 0.692{\tiny$\pm$0.005}
        & 0.420{\tiny$\pm$0.003}
        & 0.376{\tiny$\pm$0.002}
        & 0.101{\tiny$\pm$0.001}
        & 0.690{\tiny$\pm$0.003}
        & 0.059{\tiny$\pm$0.001} \\
        w/o learned pathwise flow
        & 0.697{\tiny$\pm$0.004}
        & 0.417{\tiny$\pm$0.002}
        & 0.379{\tiny$\pm$0.002}
        & 0.109{\tiny$\pm$0.001}
        & 0.665{\tiny$\pm$0.004}
        & 0.081{\tiny$\pm$0.002} \\
        \model{}
        & \textbf{0.684{\tiny$\pm$0.005}}
        & \textbf{0.425{\tiny$\pm$0.002}}
        & \textbf{0.371{\tiny$\pm$0.002}}
        & \textbf{0.088{\tiny$\pm$0.000}}
        & \textbf{0.708{\tiny$\pm$0.003}}
        & \textbf{0.032{\tiny$\pm$0.000}} \\
        \bottomrule
    \end{tabular}
    \vspace{-15pt}
\end{table*}

%% file: sec/5_conclusions.tex
\section{Conclusion and Limitations}
We present \model{}, a coherent flow matching framework for probabilistic ocean modeling. By coupling member residuals through the evolving ensemble context, \model{} learns a coherent state-conditioned pathwise law that refines ensemble diversity around a controlled mean anchor. FDO parameterizes both the mean anchor and residual-flow velocity through a unified composition of factorized dynamics and learned state-conditioned response. Experiments demonstrate improved ensemble-mean accuracy, probabilistic reliability, and physical-consistency diagnostics, supporting coherent flow matching as an effective approach to ocean uncertainty modeling. Current ocean forecasting relies on the forcings produced by external model, extending the framework to end-to-end probabilistic ocean-atmosphere coupling may further improve ocean forecasting accuracy.

%% file: sec/X_suppl.tex
\makeatletter
\newcommand{\appendixlocaltoc}{%
    \begingroup
    \hypersetup{linkcolor=black}%
    \setcounter{tocdepth}{2}%
    \section*{Appendix Contents}%
    \@starttoc{atoc}%
    \endgroup
}
\let\appendix@original@addcontentsline\addcontentsline
\renewcommand{\addcontentsline}[3]{%
    \appendix@original@addcontentsline{#1}{#2}{#3}%
    \def\appendix@target{#1}%
    \def\appendix@toc{toc}%
    \ifx\appendix@target\appendix@toc
        \appendix@original@addcontentsline{atoc}{#2}{#3}%
    \fi
}
\makeatother

\appendixlocaltoc
\clearpage

\section{Proof of the Coherent Conditional Law}
\label{app:coherence_proof}

\begin{proof}[Proof of Lemma~\ref{lem:coherent_transport}]
\itshape
Fix trained parameters and ensemble size $M$, with $\mathcal X=\mathbb R^d$ denoting valid ocean-grid coordinates. State and context spaces carry their Borel sigma-algebras, and $\rho_0^M(\cdot\mid c)$ is a probability kernel. Assume joint measurability of the velocity, anchor, reference composition, calibration, and context encoding. For continuous transport, assume continuity in $\tau$ and global Lipschitz continuity in $Z$, uniformly on $[0,1]$ with constant $L<\infty$ for fixed $(c,C)$, including the shared context. With $\|Z\|_M^2=M^{-1}\sum_m\|z^{(m)}\|_2^2$, the flow satisfies
\begin{equation}
    \begin{aligned}
    \left\|V_\theta^M(Z,C,\tau)-V_\theta^M(Z',C,\tau)\right\|_M
    &\le L\left\|Z-Z'\right\|_M,\\
    \left\|\Phi_{\theta,C,\tau}^{M}(Z_0)-
    \Phi_{\theta,C,\tau}^{M}(Z'_0)\right\|_M
    &\le e^{L\tau}\left\|Z_0-Z'_0\right\|_M.
    \end{aligned}
    \label{eq:proof_flow_stability}
\end{equation}
Picard--Lindel\"of gives existence and uniqueness; Gr\"onwall bounds sensitivity in flow time $\tau$, not autoregressive lead time. For initial law $\nu_0^M(\cdot\mid c,C)$, the joint residual law obeys the weak continuity equation,
\begin{equation}
    \rho_{\theta,\tau}^{M}(\cdot\mid c,C)
    =
    \left(\Phi_{\theta,C,\tau}^{M}\right)_{\#}
    \nu_0^{M}(\cdot\mid c,C),
    \qquad
    \partial_\tau\rho_{\theta,\tau}^{M}
    +
    \nabla_Z\!\cdot\!\left(
        \rho_{\theta,\tau}^{M}V_{\theta}^{M}
    \right)
    =0.
    \label{eq:proof_joint_continuity}
\end{equation}
All member paths are components of this joint flow. For $m\neq j$, the chain rule at differentiable points and fixed $(c,C)$ gives
\begin{equation}
    D_{z^{(j)}}
    \left[V_{\theta}^{M}(Z,C,\tau)\right]^{(m)}
    =
    D_{S}v_{\theta}\,D_{z^{(j)}}S(Z).
    \label{eq:proof_cross_member_dependence}
\end{equation}
A nonzero cross derivative makes the velocity locally nonseparable: $S(Z)$ enables cross-member interaction beyond parameter sharing, although its learned strength need not be nonzero. The mean, standard deviation, and mean absolute deviation remain Lipschitz even at nondifferentiable points.

Shared parameters and symmetric summaries give $V_\theta^M(\Pi Z,\Pi C,\tau)=\Pi V_\theta^M(Z,C,\tau)$; uniqueness transfers this joint-relabeling symmetry to the flow.

Center control follows from the projection itself. For $U=(u^{(1)},\ldots,u^{(M)})$ and $\bar u=M^{-1}\sum_m u^{(m)}$,
\begin{equation}
    \begin{aligned}
    \mathcal{R}^{M}(U)^{(m)}&=u^{(m)}-\bar u,
    &\frac{1}{M}\sum_{m=1}^{M}
    \left[\mathcal{R}^{M}(U)\right]^{(m)}&=0,\\
    \left[\mathcal{R}^{M}(U)\right]^{(m)}-
    \left[\mathcal{R}^{M}(U)\right]^{(j)}
    &=u^{(m)}-u^{(j)}.
    \end{aligned}
    \label{eq:proof_projection_properties}
\end{equation}
Centering therefore preserves the pairwise contrasts of its input. Taking $U=\mathcal{K}_{\theta,c}^{M}(\mathcal{T}_{\theta,C}^{M}(Z_0),\Xi)$ after reference composition, calibration, and stabilization yields
\begin{equation}
    \frac{1}{M}\sum_{m=1}^{M}
    \left[G_{\theta,c}^{M}(\Xi)\right]^{(m)}
    =\mu_\theta(c,\Xi)
    +\frac{1}{M}\sum_{m=1}^{M}
    \left[\mathcal{R}^{M}(U)\right]^{(m)}
    =\mu_\theta(c,\Xi).
    \label{eq:proof_pathwise_mean_anchor}
\end{equation}
The identity holds for every $\Xi$, even when the anchor varies with the stochastic reference. Joint measurability makes $G_{\theta,c}^{M}$ measurable in $(c,\Xi)$, so its pushforward of $\rho_0^M(\cdot\mid c)$ is a probability kernel.

For continuous transport, $\mathcal T_{\theta,C}^{M}=\Phi_{\theta,C,1}^{M}$. Each implemented update in Eq.~(\ref{eq:euler_transport}), including masking and clipping, is measurable and permutation equivariant. Finite composition therefore induces its own conditional law with the same mean identity, without ordinary differential equation regularity or equality to the continuous law.

Finally, condition on a reference rollout $r$ and its forcing, with remaining stochastic inputs $\Omega$. Composing the transition-wise generators and context encodings defines $\mathcal G_{\theta,r}^{M}$ with
\begin{equation}
    Q_{\theta,0:T}^{M}(\cdot\mid r)
    =
    \left(\mathcal{G}_{\theta,r}^{M}\right)_{\#}
    \rho_{\Omega}^{M}(\cdot\mid r),
    \qquad
    \frac{1}{M}\sum_{m=1}^{M}
    \widehat{\mathbf{O}}_{t}^{(m)}
    =\mu_{\theta,t},
    \quad t=1,\ldots,T.
    \label{eq:proof_pathspace_law}
\end{equation}
This law describes reference-guided ensembles, not flow-driven autoregressive particles. Coherence concerns jointly conditioned residual paths and exact center control at each transition; the result neither requires persistent memory nor establishes calibration or long-horizon skill.
\end{proof}

\clearpage
\input{sec/2_Related_work}

\section{Evaluation Metrics}
\label{app:metrics}

Evaluation distinguishes the accuracy of the ensemble center from the quality of the distribution around it. Root mean square error (RMSE) and anomaly correlation coefficient (ACC) assess the ensemble mean; the continuous ranked probability score (CRPS), rank deviation (RankDev), spread--skill ratio (SSR), and calibration error (CalibErr) assess complementary aspects of the ensemble distribution. Lower RMSE, CRPS, RankDev, CalibErr, and physical-diagnostic errors are better; higher ACC and Ni\~no-3.4 correlation are better; and SSR is optimal at one. Ocean-specific diagnostics further examine regional variability, multivariate consistency, and member-level structure. Let $x$ and $y$ denote a predicted and target ocean field, let $c$ index the evaluated ocean channels, and let $(i,j)$ index latitude and longitude. The latitude weight is
\begin{equation}
    w_i
    =
    \frac{H\cos(\phi_i)}
    {\sum_{r=1}^{H}\cos(\phi_r)},
    \label{eq:appendix_lat_weight}
\end{equation}
where $H$ is the number of evaluated latitudes and $\phi_i$ is the latitude at row $i$. Spatial statistics are computed over valid ocean points within the relevant evaluation region. Where used, $\epsilon>0$ is a small constant for numerical stability.

\paragraph{Root mean square error (RMSE).}
RMSE is computed on normalized ocean variables and then averaged over channels:
\begin{equation}
    \mathrm{RMSE}
    =
    \frac{1}{C}
    \sum_{c=1}^{C}
    \sqrt{
    \operatorname{nanmean}_{i,j}
    \left[
    w_i\left(\hat{x}_{c,i,j}-y_{c,i,j}\right)^2
    \right]
    } .
    \label{eq:appendix_rmse}
\end{equation}

\paragraph{Anomaly correlation coefficient (ACC).}
ACC is computed on anomalies relative to the climatological mean. Let $\hat{x}'=\hat{x}-x^{\mathrm{clim}}$ and $y'=y-y^{\mathrm{clim}}$. The channel-wise anomaly correlation is
\begin{equation}
    \mathrm{ACC}_c
    =
    \frac{
    \sum_{i,j} w_i \hat{x}'_{c,i,j} y'_{c,i,j}
    }
    {
    \sqrt{
    \left(\sum_{i,j} w_i(\hat{x}'_{c,i,j})^2\right)
    \left(\sum_{i,j} w_i(y'_{c,i,j})^2\right)}
    },
    \qquad
    \mathrm{ACC}=\frac{1}{C}\sum_{c=1}^{C}\mathrm{ACC}_c .
    \label{eq:appendix_acc}
\end{equation}

\paragraph{Continuous ranked probability score (CRPS).}
For an ensemble $\{\hat{x}^{(m)}\}_{m=1}^{M}$, we use the empirical ensemble form~\citep{matheson1976scoring,hersbach2000decomposition,gneiting2007strictly},
\begin{equation}
    \begin{aligned}
    \mathrm{CRPS}
    &=
    \frac{1}{C}\sum_{c=1}^{C}
    \operatorname{nanmean}_{i,j}
    \left[
    w_i
    \left(
    \frac{1}{M}\sum_{m=1}^{M}
    \left|\hat{x}_{c,i,j}^{(m)}-y_{c,i,j}\right|
    -
    \frac{1}{2M^2}\sum_{m,n=1}^{M}
    \left|\hat{x}_{c,i,j}^{(m)}-\hat{x}_{c,i,j}^{(n)}\right|
    \right)
    \right].
    \end{aligned}
    \label{eq:appendix_crps_metric}
\end{equation}

\paragraph{Rank deviation (RankDev).}
At every channel--grid point where the target and all $M$ members are valid, the target rank is
\begin{equation}
    r_{c,i,j}
    =
    \sum_{m=1}^{M}
    \mathbf{1}\!\left\{
    \hat{x}_{c,i,j}^{(m)}<y_{c,i,j}
    \right\}
    \in\{0,\ldots,M\}.
    \label{eq:appendix_rank}
\end{equation}
Let $\mathcal{V}$ denote the set of valid channel--grid points. The latitude-weighted rank histogram and its deviation from the uniform histogram are
\begin{equation}
    h_b
    =
    \frac{
    \sum_{(c,i,j)\in\mathcal{V}}
    w_i\mathbf{1}\{r_{c,i,j}=b\}
    }{
    \sum_{(c,i,j)\in\mathcal{V}}w_i+\epsilon
    },
    \qquad
    \mathrm{RankDev}
    =
    \left[
    \sum_{b=0}^{M}
    \left(h_b-\frac{1}{M+1}\right)^2
    \right]^{1/2}.
    \label{eq:appendix_rankdev}
\end{equation}
We use $M=20$, giving 21 rank bins. Lower RankDev indicates a rank distribution closer to uniform.

\paragraph{Spread--skill ratio (SSR).}
Let $\bar{x}_{c,i,j}=M^{-1}\sum_m\hat{x}_{c,i,j}^{(m)}$ and let
\begin{equation}
    s_{c,i,j}
    =
    \left[
    \frac{1}{M-1}
    \sum_{m=1}^{M}
    \left(
    \hat{x}_{c,i,j}^{(m)}-\bar{x}_{c,i,j}
    \right)^2
    \right]^{1/2}
    \label{eq:appendix_ensemble_spread}
\end{equation}
be the sample standard deviation across members. Denoting the latitude-weighted, channel-averaged RMSE in Eq.~\eqref{eq:appendix_rmse} by $\mathcal{E}_w(\cdot,\cdot)$, we compute
\begin{equation}
    \mathrm{SSR}
    =
    \frac{
    \mathcal{E}_w(s,0)
    }{
    \mathcal{E}_w(\bar{x},y)+\epsilon
    }.
    \label{eq:appendix_ssr}
\end{equation}
The numerator measures ensemble spread and the denominator measures ensemble-mean skill; values closer to one are better.

\paragraph{Calibration error (CalibErr).}
For nominal central coverage $p$, let $q_\alpha(c,i,j)$ be the empirical $\alpha$-quantile of the $M$ members at $(c,i,j)$. Its latitude-weighted empirical coverage is
\begin{equation}
    \widehat{c}(p)
    =
    \frac{
    \sum_{(c,i,j)\in\mathcal{V}}
    w_i\,
    \mathbf{1}\!\left\{
    q_{(1-p)/2}(c,i,j)
    \le y_{c,i,j}\le
    q_{(1+p)/2}(c,i,j)
    \right\}
    }{
    \sum_{(c,i,j)\in\mathcal{V}}w_i+\epsilon
    }.
    \label{eq:appendix_empirical_coverage}
\end{equation}
Using $\mathcal{P}=\{0.1,0.2,\ldots,0.9\}$, the scalar calibration error is
\begin{equation}
    \mathrm{CalibErr}
    =
    \frac{1}{|\mathcal{P}|}
    \sum_{p\in\mathcal{P}}
    \left|\widehat{c}(p)-p\right|.
    \label{eq:appendix_caliberr}
\end{equation}
Lower CalibErr indicates closer agreement between nominal and empirical coverage. Unless stated otherwise, all six global metrics are first computed separately for each initial condition and then summarized across initial conditions.

\paragraph{Ni\~no-3.4 diagnosis.}
We assess El Ni\~no--Southern Oscillation (ENSO) variability using the Ni\~no-3.4 sea surface temperature (SST) anomaly index. Let $\mathcal{R}_{3.4}$ denote the region $5^\circ\mathrm{S}$--$5^\circ\mathrm{N}$, $170^\circ\mathrm{W}$--$120^\circ\mathrm{W}$. For initial condition $q$, the member and verifying indices are computed directly from the raw SST anomaly fields as
\begin{equation}
    n_q^{(m)}
    =
    \frac{1}{|\mathcal{R}_{3.4}|}
    \sum_{(i,j)\in\mathcal{R}_{3.4}}
    \hat{s}_{q,i,j}^{(m)},
    \qquad
    n_q^\star
    =
    \frac{1}{|\mathcal{R}_{3.4}|}
    \sum_{(i,j)\in\mathcal{R}_{3.4}}
    s_{q,i,j}^\star,
    \label{eq:appendix_nino_index}
\end{equation}
without adding a climate mean, normalizing variables, or applying latitude weights. Let $\bar n_q=M^{-1}\sum_m n_q^{(m)}$, $\mu_{\bar n}=I^{-1}\sum_q\bar n_q$, and $\mu_{n^\star}=I^{-1}\sum_q n_q^\star$. At each reported lead time, across $I=10$ initial conditions, we compute
\begin{equation}
    \begin{aligned}
    \mathrm{RMSE}_{3.4}
    &=
    \left[
    \frac{1}{I}\sum_{q=1}^{I}
    \left(\bar n_q-n_q^\star\right)^2
    \right]^{1/2},\\
    \mathrm{Corr}_{3.4}
    &=
    \frac{
    \sum_q(\bar n_q-\mu_{\bar n})(n_q^\star-\mu_{n^\star})
    }{
    \sqrt{
    \sum_q(\bar n_q-\mu_{\bar n})^2
    \sum_q(n_q^\star-\mu_{n^\star})^2
    }},\\
    \mathrm{CRPS}_{3.4}
    &=
    \frac{1}{I}\sum_{q=1}^{I}
    \left[
    \frac{1}{M}\sum_{m=1}^{M}|n_q^{(m)}-n_q^\star|
    -
    \frac{1}{2M^2}\sum_{m,n=1}^{M}|n_q^{(m)}-n_q^{(n)}|
    \right].
    \end{aligned}
    \label{eq:appendix_nino_metrics}
\end{equation}

The three physical-consistency diagnostics below distinguish the evolution of the ensemble center from the spatial variability of individual members. The first two assess central-state consistency, whereas the third computes spatial variability before averaging across members. Here SSS and SSH denote sea surface salinity and height, while U0 and V0 denote surface zonal and meridional velocities. The suffix \textit{a} indicates anomaly fields: SSSa, SSTa, SSHa, U0a, and V0a. We use the latitude-weighted mean and standard deviation over valid ocean points $\mathcal{O}$,
\begin{equation}
    \langle a\rangle_w
    =
    \frac{\sum_{(i,j)\in\mathcal{O}}w_i a_{i,j}}
    {\sum_{(i,j)\in\mathcal{O}}w_i},
    \qquad
    \sigma_w(a)
    =
    \left[
    \frac{\sum_{(i,j)\in\mathcal{O}}
    w_i(a_{i,j}-\langle a\rangle_w)^2}
    {\sum_{(i,j)\in\mathcal{O}}w_i}
    \right]^{1/2}.
    \label{eq:appendix_weighted_ocean_moments}
\end{equation}

\paragraph{SSTa mean tendency error.}
Let $\bar{s}_t$ be the ensemble-mean SST anomaly field and $y_t^s$ its verifying field at lead day $t$. Over the $T=120$ day trajectory, we compute
\begin{equation}
    E_{\mathrm{SSTa}}
    =
    \frac{1}{T-1}
    \sum_{t=2}^{T}
    \frac{
    \left|
    (\langle\bar{s}_t\rangle_w-\langle\bar{s}_{t-1}\rangle_w)
    -(\langle y_t^s\rangle_w-\langle y_{t-1}^s\rangle_w)
    \right|
    }{
    \sigma_w(y_t^s)+\epsilon
    }.
    \label{eq:appendix_ssta_tendency}
\end{equation}
The latitude-weighted global mean summarizes the evolution of surface thermal anomalies. Comparing its daily increments tests whether the ensemble center follows the observed aggregate tendency throughout the rollout, rather than merely matching the final field. This provides a conservation-related consistency check on the mean trajectory, not a test of exact energy conservation: SST anomalies alone do not determine the full ocean heat budget.

\paragraph{SSH--current correlation error.}
For a valid grid set $\mathcal{B}$, let $\widetilde a_{i,j}=a_{i,j}-\langle a\rangle_{w,\mathcal B}$ and define $\widetilde b$ analogously. The latitude-weighted correlation is
\begin{equation}
    \operatorname{corr}_w(a,b;\mathcal{B})
    =
    \frac{
    \sum_{(i,j)\in\mathcal{B}}
    w_i\widetilde a_{i,j}\widetilde b_{i,j}
    }{
    \left[
    \sum_{(i,j)\in\mathcal{B}}w_i\widetilde a_{i,j}^{2}
    \sum_{(i,j)\in\mathcal{B}}w_i\widetilde b_{i,j}^{2}
    \right]^{1/2}}.
    \label{eq:appendix_weighted_corr}
\end{equation}
Here $\mathcal{B}$ contains ocean points satisfying $10^\circ\le|\phi_i|\le80^\circ$ whose immediate grid neighbors are also valid. The superscripts $\mathrm{pred}$ and $\mathrm{gt}$ denote the ensemble-mean and verifying fields, respectively. For $q\in\{\mathrm{pred},\mathrm{gt}\}$, using discrete grid gradients of SSH anomaly,
\begin{equation}
    r_u^q
    =
    \operatorname{corr}_w
    \left(\mathrm{U0a}^{q},-\partial_y\mathrm{SSHa}^{q};\mathcal{B}\right),
    \qquad
    r_v^q
    =
    \operatorname{corr}_w
    \left(\mathrm{V0a}^{q},\partial_x\mathrm{SSHa}^{q};\mathcal{B}\right),
    \label{eq:appendix_ssh_current_corr}
\end{equation}
and the ensemble-mean consistency error is
\begin{equation}
    E_{\mathrm{SSH\text{-}current}}
    =
    \frac{1}{2}
    \left(
    |r_u^{\mathrm{pred}}-r_u^{\mathrm{gt}}|
    +
    |r_v^{\mathrm{pred}}-r_v^{\mathrm{gt}}|
    \right).
    \label{eq:appendix_ssh_current_error}
\end{equation}
SSH gradients and near-surface currents are dynamically related away from the equator. Comparing their correlations with those of the verifying state tests whether the ensemble center retains this cross-variable relationship, which separate field-wise errors cannot establish. The diagnostic measures agreement in aggregate multivariate structure; it does not impose a geostrophic constraint or measure its pointwise residual.

\paragraph{Member spatial standard-deviation error.}
For the $K=5$ surface anomaly variables $\{\mathrm{SSSa},\mathrm{U0a},\mathrm{V0a},\mathrm{SSTa},\mathrm{SSHa}\}$, we compute
\begin{equation}
    E_{\mathrm{std}}
    =
    \frac{1}{K}
    \sum_{k=1}^{K}
    \left|
    \log
    \frac{
    \frac{1}{M}\sum_{m=1}^{M}
    \sigma_w(x_k^{(m)})+\epsilon
    }{
    \sigma_w(y_k)+\epsilon
    }
    \right|.
    \label{eq:appendix_member_spatial_std}
\end{equation}
Unlike the first two diagnostics, this metric computes spatial variability within each member before averaging across members. It therefore complements center-based evaluation by detecting a deficit or excess in the average member variability across thermodynamic and dynamical surface fields. The absolute log-ratio penalizes deficient and excessive variability symmetrically. This amplitude-based diagnostic complements spectral analysis without requiring each member to reproduce the verifying field's spatial phase.

\section{Data Details}
\label{appendix:data}

\subsection{Dataset}

We conduct the experiments on GLORYS12 reanalysis data, which provides daily mean ocean variables covering latitudes between $-80^\circ$ and $90^\circ$ from 1993 to the present. The data can be downloaded from \url{https://data.marine.copernicus.eu}. We use 1993--2017 for training, 2018--2019 for validation, and 2020 for the main test set. The additional 2024 evaluation in Table~\ref{tab:generalization_enso_2024} assesses generalization beyond this test year without changing the training split. The ocean variables used in our experiments include four depth-dependent variables and one surface variable. The four depth-dependent variables are Sea salinity (S), Sea stream zonal velocity (U), Sea stream meridional velocity (V), and Sea temperature (T), each of which contains 23 depth levels. These depth levels correspond to 0.49 m, 2.65 m, 5.08 m, 7.93 m, 11.41 m, 15.81 m, 21.60 m, 29.44 m, 40.34 m, 55.76 m, 77.85 m, 92.32 m, 109.73 m, 130.67 m, 155.85 m, 186.13 m, 222.48 m, 266.04 m, 318.13 m, 380.21 m, 453.94 m, 541.09 m, and 643.57 m. The surface variable used in this work is Sea surface height (SSH). To improve computational efficiency, we downsample all variables to a spatial resolution of $1^\circ$ using bilinear interpolation, resulting in an original grid size of $H=181$ and $W=360$. To better adapt the data to the input requirements of different model architectures, we retain the first 180 latitude grids and use data with a spatial size of $180 \times 360$ in the experiments. Four ERA5 variables provide atmospheric forcing: 10 metre u wind component (U10M), 10 metre v wind component (V10M), 2 metre temperature (T2M), and mean sea level pressure (MSLP). Each ocean transition uses these four fields at the current and next day, yielding eight atmospheric input channels. Ocean states are daily means, while forcing fields are sampled at 12:00 UTC. ERA5 data are available from \url{https://cds.climate.copernicus.eu}. Table~\ref{tab:appendix_data} summarizes the variables used in the main experiments.

\begin{table*}[ht]
\centering
\caption{The data details used in this work.}
\label{tab:appendix_data}
\setlength{\tabcolsep}{2pt}
\begin{tabular}{ccccccc}
\toprule
\textbf{Type}        & \textbf{Full name}                 & \textbf{Abbreviation} & \textbf{Layers} & \textbf{Time}      & \textbf{Dt} & \textbf{Spatial Res.}\\
\midrule
Atmosphere & 10 metre u wind component     & U10M         & 1      & 1993--2020 & 24h             & $1^\circ$               \\
Atmosphere & 10 metre v wind component     & V10M         & 1      & 1993--2020 & 24h             & $1^\circ$               \\
Atmosphere & 2 metre temperature           & T2M          & 1      & 1993--2020 & 24h             & $1^\circ$               \\
Atmosphere & Mean sea level pressure       & MSLP         & 1      & 1993--2020 & 24h             & $1^\circ$               \\
\midrule
Ocean     & Sea salinity                   & S            & 23     & 1993--2020 & 24h             & $1^\circ$               \\
Ocean     & Sea stream zonal velocity      & U            & 23     & 1993--2020 & 24h             & $1^\circ$               \\
Ocean     & Sea stream meridional velocity & V            & 23     & 1993--2020 & 24h             & $1^\circ$               \\
Ocean     & Sea temperature                & T            & 23     & 1993--2020 & 24h             & $1^\circ$               \\
Ocean     & Sea surface height             & SSH          & 1      & 1993--2020 & 24h             & $1^\circ$               \\
\midrule
Static    & Land-sea mask                  & LSM          & ---    & ---        & ---             & $1^\circ$               \\
\bottomrule
\end{tabular}
\vspace{-9pt}
\end{table*}

\subsection{Data Preprocessing}

Different ocean variables and forcing variables have different physical meanings, units, and numerical ranges. To reduce the influence of these scale differences, we normalize each variable before feeding it into the model. This preprocessing step enables the model to focus more on learning the spatiotemporal evolution of ocean states rather than adapting to the magnitude discrepancies among variables. We first compute daily climatological means for ocean salinity, temperature, and sea-surface height using the first 365 days of each year in the 1993--2017 training period. The resulting climatology has dimensions $365 \times 47 \times 181 \times 360$, corresponding to days, variables, latitude, and longitude, respectively. The 47 variables comprise salinity and temperature at 23 depth levels each, together with sea-surface height. These variables are converted into anomalies by subtracting their corresponding daily climatological means. We then compute variable-wise means and standard deviations over the same training period, using anomaly fields for these 47 variables and the original fields for the remaining variables, and apply these statistics for normalization. For forcing variables, means and standard deviations are computed from ERA5 data over 1959--2017. Each variable is normalized independently using its own mean and standard deviation. Specifically, for each variable, we first subtract its corresponding mean and then divide the result by its corresponding standard deviation. For land regions, the original ocean variables contain `nan' values. Before inputting the data into the model, these missing values are filled with zeros after normalization. The Land-sea mask is used as a static variable to indicate the distribution of land and ocean regions.

\section{Detailed Method}
\label{app:detailed_method}

Mean anchoring and ensemble-conditioned residual transport jointly define the coherent conditional law at each ocean transition. The deterministic reference establishes the basis for the anchor, while the residual flow learns the distribution around it. Separate FDO instances parameterize the reference transition and residual velocity.

\subsection{Mean Anchoring}

The deterministic reference and residual flow use the same normalized ocean representation. Let $q_t$ denote a raw ocean channel and $q^{\mathrm{clim}}_{d(t)}$ its daily climatology for salinity, temperature, and sea-surface height; for velocity channels, set $q^{\mathrm{clim}}_{d(t)}=0$. With channel-wise training statistics defining $\mathrm{Norm}(\cdot)$ and ocean mask $M_o$, the normalized state is
\begin{equation}
    \tilde{q}_t
    =
    M_o \odot
    \mathrm{Norm}\!\left(q_t - q^{\mathrm{clim}}_{d(t)}\right).
    \label{eq:appendix_norm}
\end{equation}
Normalization is evaluated on valid ocean points, with land values subsequently set to zero. The masked expressions here and below use this zero-fill convention. Member conditions introduce structured perturbations only in the normalized ocean-state channels,
\begin{equation}
    \tilde{q}_{t}^{(m)}
    =
    M_o \odot
    \left(\tilde{q}_t + \sigma_{\eta} d_q\,\epsilon_q^{(m)}\right),
    \label{eq:appendix_member_condition}
\end{equation}
where $d_q$ sets the variable- and depth-dependent scale. Concatenating the perturbed state with the shared forcing gives $c_t^{(m)}$ in Eq.~\eqref{eq:base_anchor}. The perturbation reference supplies initial ensemble variability and a spread scale for subsequent distributional refinement. For smooth, spatially centered Perlin fields $\xi_q^{(m)}$, ocean-domain root mean square (RMS) normalization gives
\begin{equation}
    \epsilon_q^{(m)}
    =
    \frac{M_o \odot \xi_q^{(m)}}
    {
    \sqrt{
    \frac{\sum_{i,j} M_o(i,j)\left(\xi_q^{(m)}(i,j)\right)^2}
         {\sum_{i,j} M_o(i,j) + \varepsilon}
    }
    + \varepsilon
    } .
    \label{eq:appendix_perlin_norm}
\end{equation}
Salinity, temperature, velocity, and sea-surface height have separate base scales, with subsurface amplitudes decaying with depth. These relative scales are fixed, whereas the positive global amplitude $\sigma_\eta$ is learned jointly with the probabilistic model. Forcing channels and the land--sea mask remain unperturbed. The deterministic outputs define the reference center in Eq.~\eqref{eq:base_anchor}, which the mean branch adjusts through Eq.~\eqref{eq:mean_correction}. We set $\alpha_\mu=0.25$ in normalized variable space. The channel-wise ocean-domain projection is
\begin{equation}
    \mathcal P_o r
    =
    M_o \odot
    \left(
    r
    -
    \frac{\sum_{i,j} M_o(i,j) r(i,j)}
         {\sum_{i,j} M_o(i,j)}
    \right),
    \label{eq:appendix_zero_mean_correction}
\end{equation}
where each channel has at least one valid ocean cell. Applying $\mathcal P_o$ before and after the bounded nonlinearity preserves a zero spatial mean for the correction. This spatial constraint is distinct from ensemble recentering, which fixes the member average at $\mu_{t+1}$. The mean branch adapts the predictive center to the current state, while the final ensemble projection prevents residual transport from displacing it.

\subsection{Coherent State-Conditioned Pathwise Flow}
\label{app:flow_matching_details}
\label{app:set_statistics}

Coherent residual transport learns the joint distribution of member departures around the mean anchor. Suppressing the physical-time index within a transition, let $\bar{x}_{\mathrm{base}}=M^{-1}\sum_m x_{\mathrm{base}}^{(m)}$ denote the mean of the $M$ reference members paired with residual particles, and define
\begin{equation}
    b^{(m)}
    =
    x_{\mathrm{base}}^{(m)}
    -
    \bar{x}_{\mathrm{base}}.
    \label{eq:appendix_base_anomaly}
\end{equation}
The member-specific centers are $\mu+b^{(m)}$, and the reference anomalies also determine the spread reference. Training uses $M_b=20$ reference members to estimate the mean anchor and $M=10$ residual particles, each yielding one output member. The anomalies above are centered over the $M$ reference members paired with these particles. At evaluation, both the reference and output ensembles contain 20 members ($M_b=M=20$). Residual noise sampled independently of the reference anomalies initializes the transport,
\begin{equation}
    z_0^{(m)} = \sigma_0 \epsilon_0^{(m)} ,
    \label{eq:appendix_initial_residual}
\end{equation}
where $\epsilon_0^{(m)}$ is smooth, spatially centered noise with per-member RMS normalization followed by the prescribed variable- and depth-dependent scaling. Centering across members precedes normalization, so the initial particle set is not constrained to have an exactly zero empirical mean; Eq.~\eqref{eq:recenter} enforces this identity at the output. Let $e=y-\operatorname{sg}(\mu)$, with $\operatorname{sg}$ denoting stop-gradient. The spread target $s_{\mathrm{target}}$ is obtained by scaling and bounding the channel-wise ocean RMS of $e$ and broadcasting it over valid ocean points. With $r_{\mathrm{pat}}^{(m)}(e)$ denoting the scaled error pattern modulated by centered, RMS-normalized member coefficients, the transport target is
\begin{equation}
    z_1^{(m)}
    =
    (1-\gamma_p)s_{\mathrm{target}}\odot\epsilon_0^{(m)}
    +
    \gamma_p\,r_{\mathrm{pat}}^{(m)}(e),
    \qquad
    \frac{1}{M}\sum_m r_{\mathrm{pat}}^{(m)} = 0,
    \label{eq:appendix_target_residual}
\end{equation}
followed by ocean masking, with $\gamma_p=0.05$. The interpolation and velocity target in the main text use one shared $\tau$ for each training ensemble, so all member velocities are learned within the same joint interpolation state. Flow matching supervises this state-dependent transport, while the mean and probability objectives supervise the generated ocean ensemble.

The integrated paths used for distributional training and inference both use two updates with $\Delta\tau=1/2$. Each update recomputes the ensemble context, evaluates the shared velocity at $\tau_k=(k+1/2)\Delta\tau$, and applies the residual bound and mask through $\mathcal P_r$ in Eq.~\eqref{eq:euler_transport}. Pathwise thus refers to member residual evolution in flow time within each ocean transition. Let $\mathcal H_\theta$ denote residual centering, smoothing, and learned spread calibration, and let $\mathcal B_{\mu,B}$ apply member stabilization, recentering, and the reference-spread floor. The composition preceding the final projection is
\begin{equation}
    \widetilde Z_{N_\tau}
    =\mathcal K_{\theta,c}^{M}(Z_{N_\tau},\Xi)
    =\mathcal B_{\mu,B}\!\left(
    \mathbf 1_M\otimes\mu+B+\mathcal H_\theta(Z_{N_\tau};c)
    \right)-\mathbf 1_M\otimes\mu,
    \label{eq:appendix_residual_composition}
\end{equation}
where $B=(b^{(1)},\ldots,b^{(M)})$. Spread calibration blends the centered residual with a version normalized to the learned spread map. Stabilization bounds member departures, and the reference floor rescales channel-wise ensemble RMS when it falls below the prescribed fraction of reference spread. Final recentering in Eq.~\eqref{eq:recenter} preserves the anchor. Together, reference anomalies and learned residuals determine ensemble diversity around this controlled center.

The shared velocity couples these paths through the evolving residual ensemble. For $Z=\{z^{(m)}\}_{m=1}^{M}$, its symmetric context is
\begin{equation}
    S(Z)
    =
    \left[
    \frac{1}{M}\sum_m z^{(m)},\,
    \sqrt{\frac{1}{M}\sum_m (z^{(m)}-\bar{z})^2},\,
    \frac{1}{M}\sum_m |z^{(m)}-\bar{z}|
    \right].
    \label{eq:appendix_set_stats}
\end{equation}
The mean, standard deviation, and mean absolute deviation describe the current residual center and dispersion; they condition the velocity rather than impose fixed moment constraints. The mean and standard deviation enter each member's input directly, alongside a learned projection of all three summaries. Ensemble summaries evolve with the residual paths, whereas the ocean--forcing embedding is shared across members and recomputed at each ocean transition. Shared parameters and symmetric summaries yield
\begin{equation}
    \mathcal T_\theta^M(\Pi Z\mid\Pi C)
    =\Pi\mathcal T_\theta^M(Z\mid C),
    \label{eq:perm_equivariance}
\end{equation}
where particles and member conditions are permuted together. Masking, clipping, reference composition, and recentering preserve this symmetry. Coherence is expressed by the dependence of member velocities on the evolving ensemble context, as formalized in Eq.~\eqref{eq:proof_cross_member_dependence}; permutation equivariance ensures that this interaction does not depend on arbitrary member ordering.

\subsection{Factorized Dynamics Operator}
\label{app:velocity_operator}

FDO parameterizes conditional transitions through factorized dynamics and a learned state-conditioned response, following the operator composition in Eqs.~\eqref{eq:fdo_encoding}--\eqref{eq:fdo_closure}. Separate instances support the deterministic reference underlying the mean anchor and the residual-flow velocity. For the velocity instance, $\mathbf X^{(m)}=[z^{(m)},c_t^{(m)},\bar z,\sigma(Z),\mathcal J_\theta(S(Z))]$, where $\sigma(Z)$ is the pointwise member standard deviation and $\mathcal J_\theta$ projects the ensemble summaries. Its context $\mathbf C$ combines flow time with the current ocean--forcing embedding. The deterministic instance takes the ocean--forcing tensor and a learned static context embedding. Conditional patch encoding gives
\begin{equation}
    H_0
    =
    \mathcal E_\theta^{\mathrm{patch}}\!\left(
    \mathbf{X}^{(m)};\mathbf C
    \right)
    \in \mathbb{R}^{D \times H_p \times W_p}.
    \label{eq:appendix_patch_embed}
\end{equation}
Within each block, contextual modulation determines the representation on which factorized dynamics operate,
\begin{equation}
    U_\ell
    =
    a_\ell(\mathbf C)\odot\mathcal N_\ell(H_\ell)
    +
    b_\ell(\mathbf C),
    \label{eq:appendix_modulation}
\end{equation}
where $\mathcal N_\ell$ is normalization. A spatial transformation $\mathcal M_\ell$ and learned projection $\mathcal P_\ell$ induce $K$ factors of width $d=D/K$. Factor-specific evolution and cross-factor interaction then take the form
\begin{align}
    F_{\ell,k}
    &=
    [\mathcal P_\ell(\mathcal M_\ell(U_\ell))]_k,
    \qquad G_{\ell,k}=\mathcal T_{\ell,k}(F_{\ell,k}),
    \nonumber\\
    \widehat G_{\ell,k}
    &=
    \sum_{j=1}^{K}
    B_{\ell,kj} G_{\ell,j},
    \qquad
    B_\ell = I + \beta \tanh(A_\ell).
    \label{eq:appendix_factor_mix}
\end{align}
Factor-specific operators evolve distinct learned subspaces, and the near-identity matrix $B_\ell$ mediates their interactions. Its coefficients are learned but state independent; a state-dependent gate selects the contribution of each interacting factor at each location,
\begin{equation}
    \begin{aligned}
    \Gamma_\ell&=\operatorname{softmax}_{k}\!\left(\mathcal G_\ell(U_\ell)\right),\\
    H'_\ell&=H_\ell+\Psi_\ell\!\left(
    \operatorname{Concat}_{k}(\Gamma_{\ell,k}\odot\widehat G_{\ell,k})\right),\\
    H_{\ell+1}&=H'_\ell+\mathcal C_\ell(H'_\ell;\mathbf C).
    \end{aligned}
    \label{eq:appendix_spatial_gate}
\end{equation}
Here $\Psi_\ell$ integrates the weighted factors, and $\mathcal C_\ell$ supplies a normalized, context-modulated channel update; together they instantiate $\Phi_\ell$ in Eq.~\eqref{eq:fdo_factorized_transition}. Factor indices identify learned subspaces, without imposing a one-to-one correspondence with physical variables or ocean processes. These updates act in encoding and decoding streams connected by a learned width projection. Normalization and patch restoration through $\mathcal R_\theta$ yield the decoded field, whose state-conditioned response gives
\begin{equation}
    \widetilde{v}_\theta
    =
    \mathcal R_\theta(H_L),
    \qquad
    v_\theta
    =
    \left[
    \mathcal{I}
    +
    \gamma\mathcal{S}_{\theta}(\,\cdot\mid\mathbf{X}^{(m)})
    \right]
    \left(
    \widetilde{v}_\theta
    \right)
    =
    \widetilde{v}_\theta
    +
    \gamma\mathcal{S}_\theta
    \left(
    \widetilde{v}_\theta,\mathbf{X}^{(m)}
    \right).
    \label{eq:appendix_patch_restore}
\end{equation}
The learned response adapts the decoded factorized transition to its input condition: member residuals, ocean--forcing inputs, and ensemble context for the velocity instance, or the ocean--forcing state for the deterministic instance. Both instances follow the same operator composition but use separate parameters. The velocity output projection and response coefficient are initialized to zero, and the deterministic instance is pretrained before ensemble learning.

\subsection{Training Losses and Calibration}
\label{app:training_objectives}

Training first establishes the deterministic reference and then learns the mean-anchored residual distribution. The reference FDO models the normalized one-step transition
\begin{equation}
    \widehat{\mathbf O}_{t+1}
    =f_{\mathrm{det},\vartheta}(c_t)
    =\mathcal M_\vartheta(\mathbf O_t,\mathbf F_t,\mathbf F_{t+1}),
    \label{eq:deterministic_transition}
\end{equation}
using the relative $L_2$ risk in Eq.~\eqref{eq:deterministic_pretraining}. Let $\ell_{\mathrm{cw}}(a,y)$ denote its sample-level error, computed separately for each ocean channel and then averaged; the spatial norm is neither squared nor latitude weighted. The pretrained parameters $\vartheta$ remain frozen in the probabilistic stage, whose trainable parameters are denoted by $\theta$. Its flow matching and CRPS terms use the fixed-grid masked average
\begin{equation}
    \|u\|_{p,o}^{p}
    =
    \frac{1}{C_oHW}\sum_{q,i,j}|M_{o,q}(i,j)u_q(i,j)|^p .
    \label{eq:appendix_masked_norm}
\end{equation}
Land cells remain zero in this fixed-grid reduction. The expectation in Eq.~\eqref{eq:compact_training_objective} averages over training transitions, stochastic member inputs, and flow time. Its mean component combines central-state supervision with improvement over the frozen reference,
\begin{equation}
    \begin{aligned}
    \ell_{\mathrm{mean}}
    ={}&a_\mu\ell_{\mathrm{cw}}(\mu,y)
    +a_{\mathrm{ens}}\ell_{\mathrm{cw}}(\bar x,y)\\
    &+a_{\mathrm{imp}}\left[
    \|\bar x-y\|_{2,o}^{2}
    -\operatorname{sg}\!\left(\|\mu^{\mathrm{base}}-y\|_{2,o}^{2}\right)
    \right]_+,
    \end{aligned}
    \label{eq:appendix_mean_losses}
\end{equation}
where $\bar x=M^{-1}\sum_m\hat x^{(m)}$ and $[r]_+=\max(0,r)$. The coefficients absorb the mean-supervision weights, including the corrected-reference loss on $\mu^{\mathrm{base}}+(\bar x-\mu^{\mathrm{base}})$, which is algebraically identical to supervision of $\bar x$. Reference comparisons are fixed under differentiation. Distributional supervision combines empirical CRPS in Eq.~\eqref{eq:crps} with $\ell_{\mathrm{gain}}=[\mathsf S_M(\widehat X,y)-\operatorname{sg}(\mathsf S_M(X_{\mathrm{base}},y))]_+$. Gradients through the integrated residual paths connect the interpolation dynamics to terminal distributional quality. The $M^2$ denominator gives the empirical ensemble score, not its finite-member fair-score correction. Member-specific reference centers are detached in the probability and member losses.

The remaining terms supervise member error and the amplitude of the learned distribution. Let $\sigma_M(X)$ be the pointwise ensemble standard deviation with denominator $M$, $\mathcal H$ the spatial smoothing map, and $\hat s_\theta$ the learned spread field. For derived spread fields, write $\|u\|_{2,g}^2=(C_oHW)^{-1}\sum_{q,i,j}u_q(i,j)^2$ for the full-grid mean square; smoothing acts on masked fields without an additional mask before reduction. The spread objectives are
\begin{equation}
    \begin{aligned}
    \ell_{\mathrm{spread}}
    & =\|\mathcal H(\sigma_M(\widehat X))-s_{\mathrm{target}}\|_{2,g}^{2}
    +a_s\|\hat s_\theta-s_{\mathrm{target}}\|_{2,g}^{2},\\
    \ell_{\mathrm{anchor}}
    & =\|[\rho\,\sigma_M(X_{\mathrm{base}})-\sigma_M(\widehat X)]_+\|_{2,o}^{2}.
    \end{aligned}
    \label{eq:appendix_spread_losses}
\end{equation}
The regularizer $\ell_{\mathrm{reg}}$ combines these terms with the member error $M^{-1}\sum_m\ell_{\mathrm{cw}}(\hat x^{(m)},y)$. Spread targets use the detached center error; the training spread penalty acts pointwise, whereas the sampling floor in $\mathcal B_{\mu,B}$ acts on channel-wise ensemble RMS. These objectives jointly supervise the center, residual transport, and terminal distribution, while final recentering enforces the mean identity independently of optimization.

\subsection{Inference}

Structured perturbations of an observed ocean state establish the initial member conditions. At each day, the deterministic reference advances these conditions, and mean correction and coherent residual transport produce the evaluated ensemble through Eqs.~\eqref{eq:appendix_residual_composition} and \eqref{eq:recenter}. Reference outputs supply the next member conditions, while the shared context is reinitialized from the new condition. The coherent law therefore governs the conditional ensemble at each transition, with the reference determining progression in physical time. All reported \model{} ensemble metrics evaluate the transformed ensemble rather than the raw reference.

Simulation takes $(\mathbf F_t,\mathbf F_{t+1})$ from the test reanalysis, whereas prediction uses OneForecast atmospheric states. OneForecast is initialized at the 12:00 UTC state aligned with the ocean IC and advanced through four 6-hour steps per ocean day, supplying the current-day and next-day forcing pair. The ocean rollout protocol otherwise remains unchanged. The preliminary coupled-forcing experiment in Table~\ref{tab:forecasting_enso_2024} instead takes atmospheric states from an independently running two-way PnP-Corrector system. Here ``two-way'' identifies the forcing source: \model{} does not feed its ensemble back into that atmospheric model, and end-to-end probabilistic ocean--atmosphere coupling remains future work.


\section{Additional Results}
The additional results examine whether the gains in aggregate scores extend across depth, lead time, and individual ensemble members. They complement the main comparisons by separating the quality of the predictive center from the spatial structure and diversity of the conditional ensemble.

Figure~\ref{fig:visual_rmse_acc_crps_12vars} reports the 1--120-day simulation RMSE, ACC, and CRPS at representative subsurface levels (109.73 m, 222.48 m, and 453.94 m), with curves and shaded bands denoting the mean and standard deviation over the first 20 initial conditions starting from January 1, 2020 at one-day intervals. Due to ocean states typically vary smoothly in the vertical direction, adjacent depth levels often exhibit similar error patterns. We therefore report representative depths to provide a compact and interpretable assessment of the model's ability to capture three-dimensional ocean structure.

Figure~\ref{fig:fig_visual_mean_plus_fixed_members} compares ensemble-mean and individual-member simulation fields at the 120-day horizon for the first IC. Each model is represented by its ensemble mean and the same three fixed member indices (members 1, 10, and 20), rather than members selected by their agreement with the ground truth. The examples show how \model{} combines a spatially organized center with structured variation among individual members. These fields complement the member-averaged spectra and physical diagnostics in the main text: they illustrate the generated distribution beyond its mean, while the aggregate diagnostics assess its behavior across ICs and members.

Additional visual results are shown in Figures~\ref{fig_30.0-day}, \ref{fig_60.0-day}, \ref{fig_90.0-day}, \ref{fig_120.0-day}, covering 17 key ocean variables at lead times from 30 to 120 days, initialized from Jan.~1, 2020. For salinity, temperature, and SSH, whose variations are relatively small compared with their background values, we visualize anomalies to better highlight these subtle yet meaningful changes. In contrast, the original fields are directly shown for the zonal and meridional velocity components, U and V. For clarity, we provide several examples of the variable naming convention: SSSa denotes the sea surface salinity anomaly, while Sa109, Sa222, and Sa453 denote salinity anomalies at depths of 109.73~m, 222.48~m, and 453.94~m, respectively. Similarly, U0 denotes the sea surface zonal velocity, and U109, U222, and U453 denote the zonal ocean velocity at the corresponding depths. Collectively, these additional visualizations further demonstrate the effectiveness of the proposed \model{} in capturing multivariate ocean-state evolution over extended simulation horizons.

\begin{figure}[ht]
\centering
\includegraphics[width=1\linewidth]{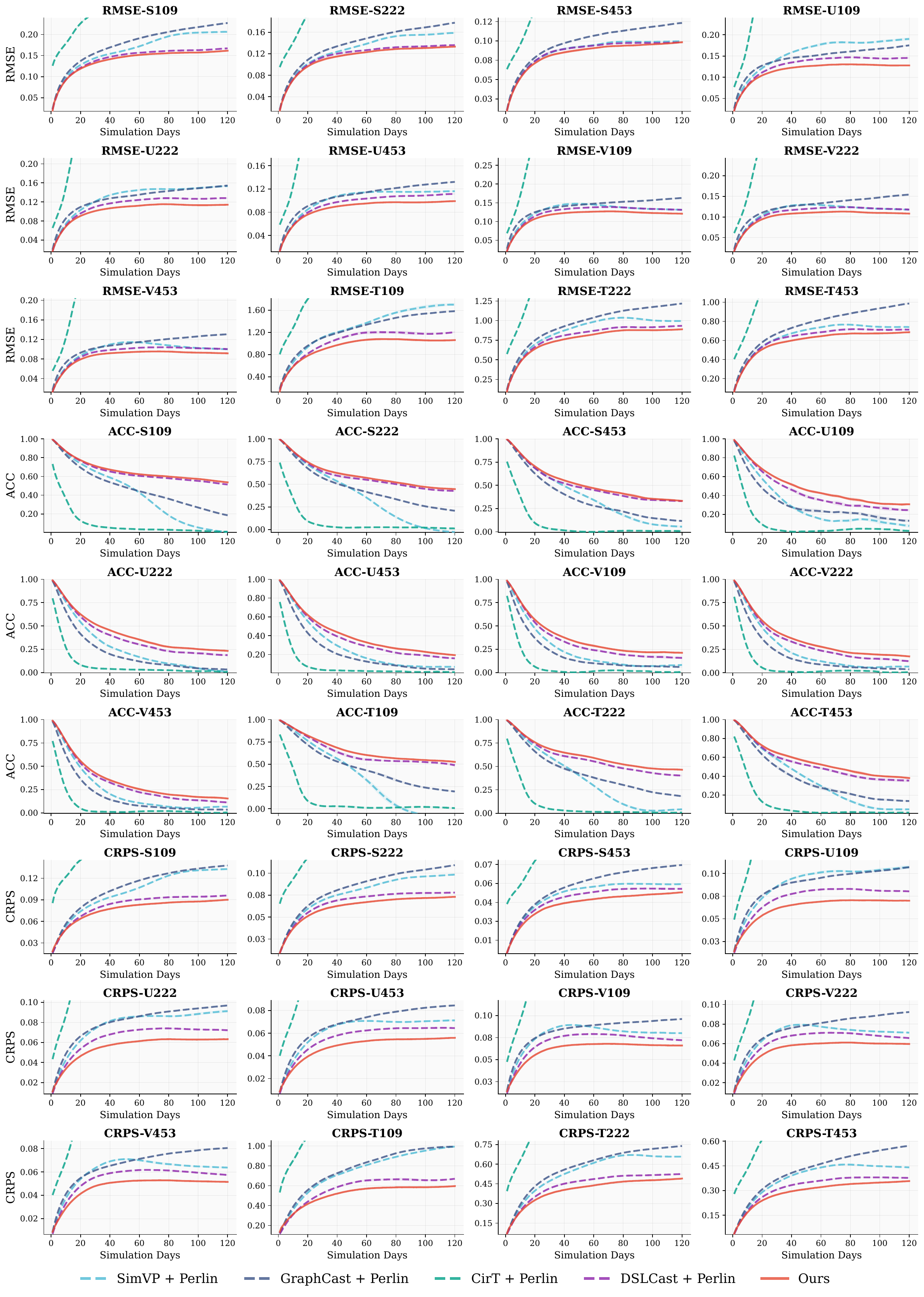}
\caption{Simulation RMSE, ACC, and CRPS over 1--120 days at depths of 109.73 m, 222.48 m, and 453.94 m. Lines and shaded bands show the mean and standard deviation across 20 ICs. Lower RMSE and CRPS and higher ACC are better.
}
\label{fig:visual_rmse_acc_crps_12vars}
\end{figure}

\begin{figure}[ht]
\centering
\includegraphics[width=1\linewidth]{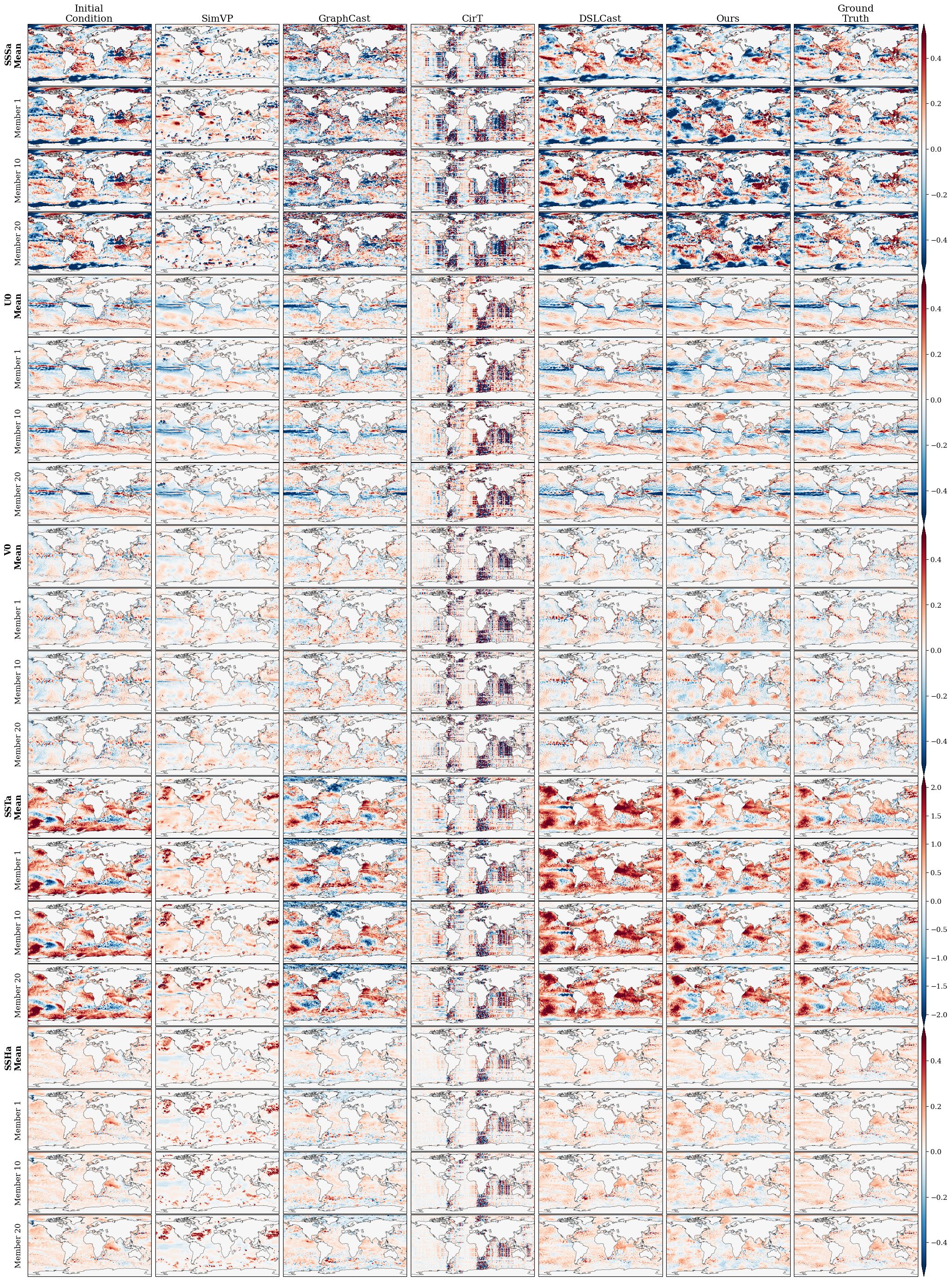}
\caption{120-day ensemble-mean and member-level simulation results at representative depth levels for the first IC. Individual members use fixed indices 1, 10, and 20 for every model.
}
\label{fig:fig_visual_mean_plus_fixed_members}
\end{figure}

\begin{figure}[ht]
\centering
\includegraphics[width=1\linewidth]{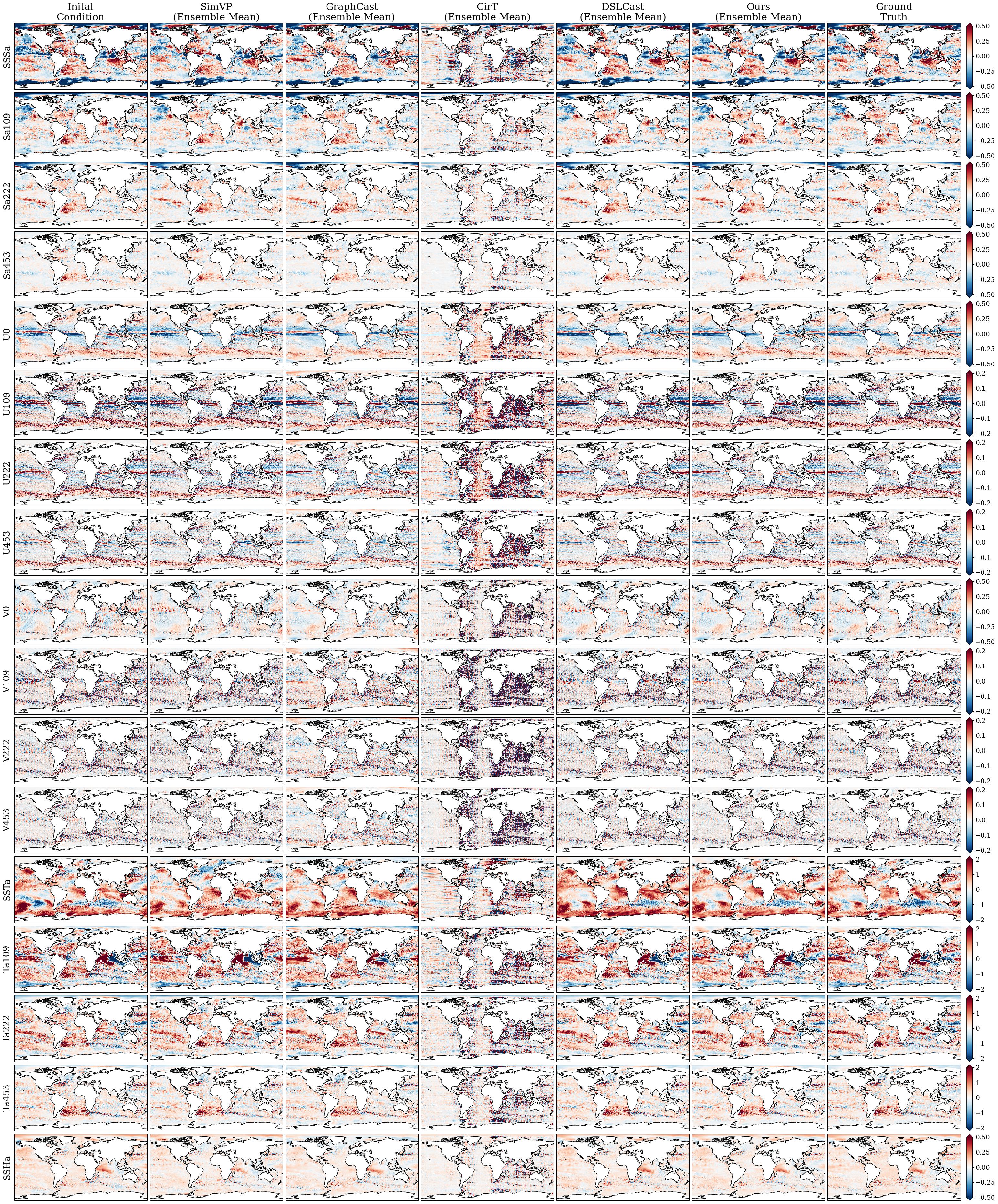}
\caption{30-day simulation results of different models at several representative depth levels.}
\label{fig_30.0-day}
\end{figure}

\begin{figure}[ht]
\centering
\includegraphics[width=1\linewidth]{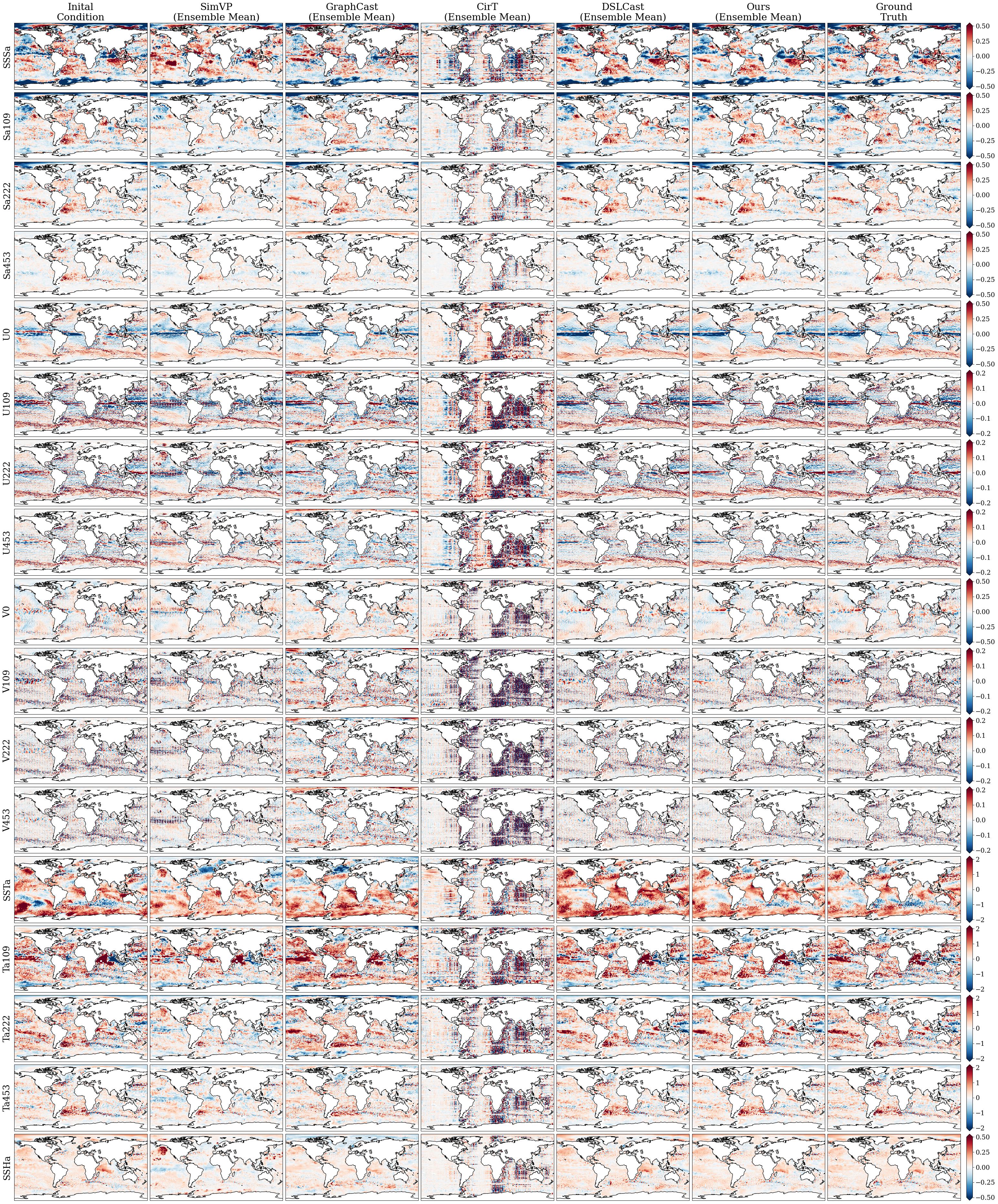}
\caption{60-day simulation results of different models at several representative depth levels.}
\label{fig_60.0-day}
\end{figure}

\begin{figure}[ht]
\centering
\includegraphics[width=1\linewidth]{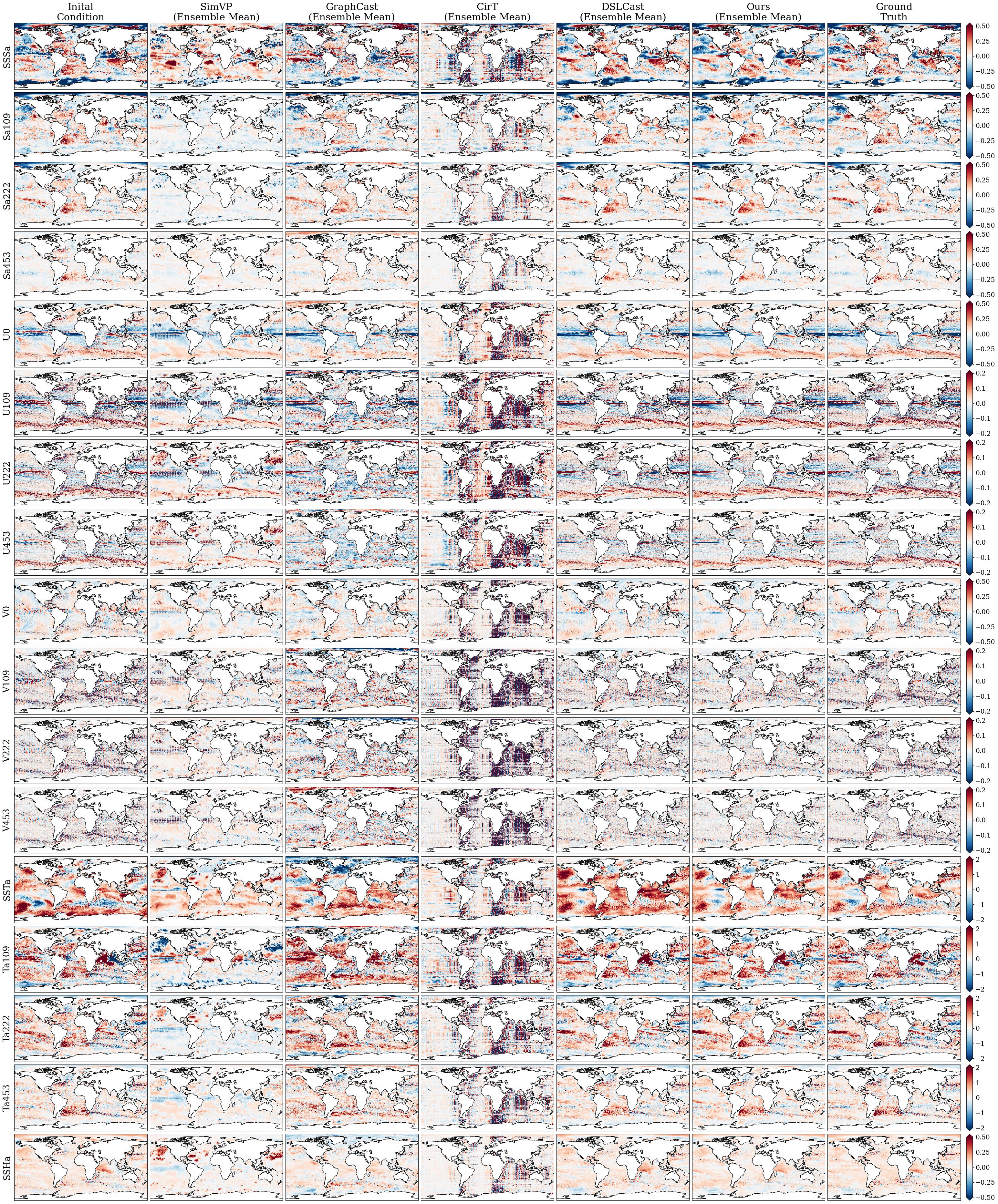}
\caption{90-day simulation results of different models at several representative depth levels.}
\label{fig_90.0-day}
\end{figure}

\begin{figure}[ht]
\centering
\includegraphics[width=1\linewidth]{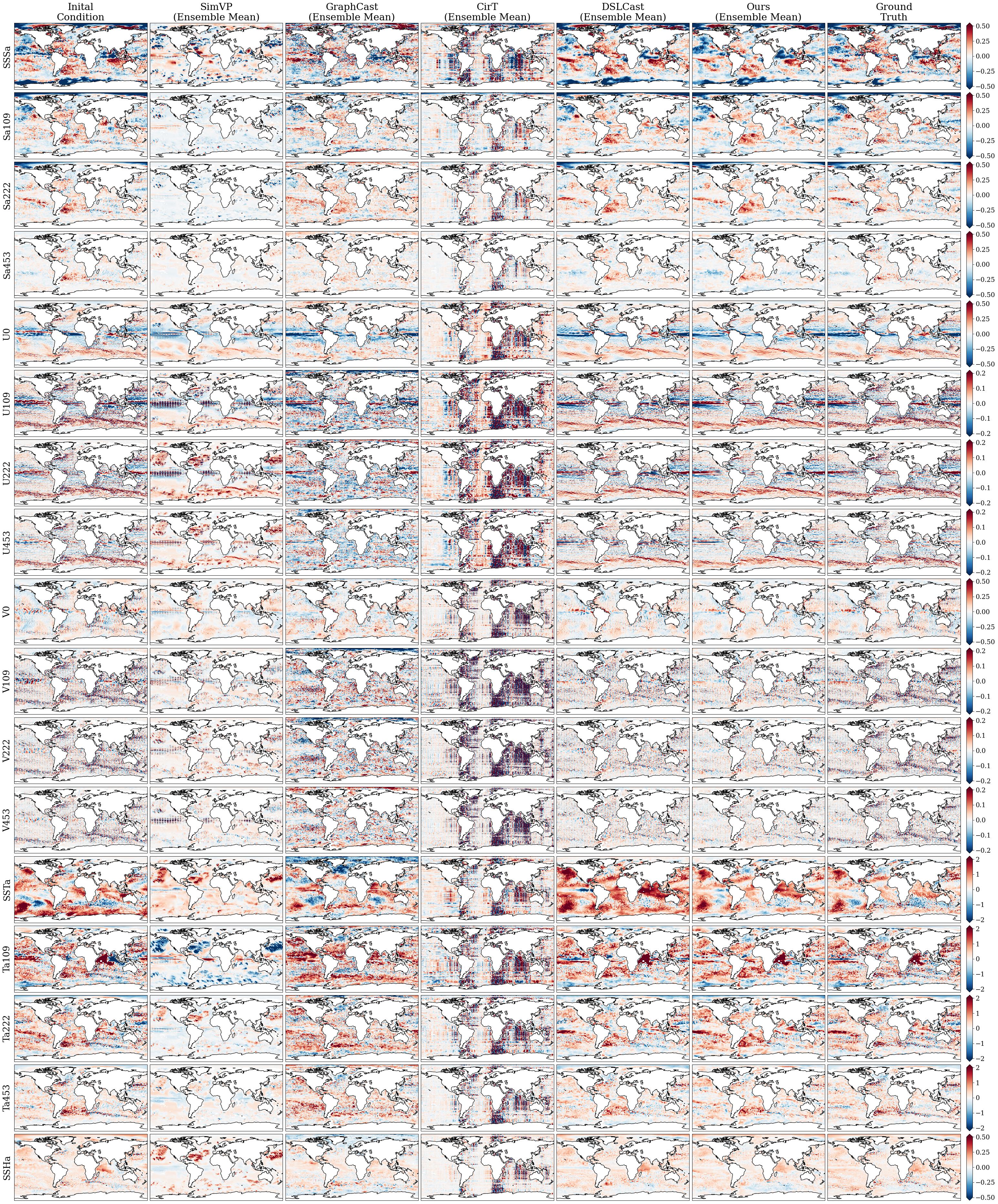}
\caption{120-day simulation results of different models at several representative depth levels.}
\label{fig_120.0-day}
\end{figure}

%% file: sec/2_Related_work.tex
\section{Related Work}

\paragraph{Deterministic ocean modeling.}
Numerical circulation models such as HYCOM~\citep{chassignet2007hycom} and NEMO~\citep{madec2015nemo}, together with state-estimation and reanalysis systems such as ECCO~\citep{forget2015ecco} and Copernicus global ocean products~\citep{lellouche2018recent}, provide the physical and observational foundations of ocean prediction. However, the integration remains computationally intensive, limiting the practical scale of ensemble generation. Data-driven models offer efficient alternatives with increasingly competitive deterministic skill. AI-GOMS~\citep{xiong2023ai} develops a reusable ocean modeling backbone; XiHe~\citep{wang2024xihe} and WenHai~\citep{cui2025forecasting} demonstrate eddy-resolving global prediction; LangYa~\citep{yang2024langya} addresses cross-spatiotemporal prediction. NeuralOM~\citep{gao2026neuralom} extends data-driven ocean modeling to subseasonal-to-seasonal simulation. The growing diversity of these models calls for consistent protocols to assess their relative performance. OceanForecastBench~\citep{jia2025oceanforecastbench} further supports systematic evaluation. These advances make ensemble generation more accessible, but accurate deterministic trajectories alone do not characterize the conditional uncertainty of future ocean states.

\paragraph{Probabilistic Earth-system modeling.}
Data-driven probabilistic models have found increasing application in Earth-system prediction, particularly in weather and precipitation forecasting~\citep{larsson2025crps,nai2025boosting,yu2024diffcast}. Examples include MetNet~\citep{sonderby2020metnet}, generative radar nowcasting~\citep{ravuri2021skilful}, diffusion-based weather ensembles with GenCast~\citep{price2023gencast}, and learned ensemble generation with FuXi-ENS~\citep{zhong2025fuxi}. Despite these advances, data-driven probabilistic ocean modeling remains comparatively underexplored. Conventional ocean ensembles~\citep{lorenz2017deterministic,leith1974theoretical,al2025data,wu2025spatiotemporal} represent uncertainty through initial-condition perturbations~\citep{toth1993ensemble,molteni1996ecmwf}, ensemble data assimilation~\citep{evensen1994sequential,houtekamer1998data}, or multiple models~\citep{buizza2005comparison,palmer2019ecmwf}, but require computationally expensive numerical integrations. Perturbing pretrained AI ocean models enables efficient ensemble generation and can improve ensemble-mean accuracy. However, independent rollouts lack joint distributional optimization and may misrepresent uncertainty in ocean dynamics. FuXi-ONS~\citep{huang2026data} learns state-dependent, spatially structured perturbations, but its memberwise objective leaves calibration implicit and its updates do not explicitly use ensemble context. \textit{In contrast, \model{} learns a coherent state-conditioned pathwise law that couples member residuals through the evolving ensemble context, jointly refining their conditional distribution while preserving a controlled mean anchor.}